\theoremstyle{plain}
\newtheorem{theorem}{Theorem}[section]
\newtheorem{proposition}[theorem]{Proposition}
\theoremstyle{definition}
\newtheorem{definition}[theorem]{Definition}
\theoremstyle{remark}
\newtheorem{remark}[theorem]{Remark}
\newtheorem*{proposition1}{\textbf{Proposition~\ref{thm:ideal_pb}}}
\newtheorem*{proposition2}{\textbf{Proposition~\ref{thm:practical_pb}}}
\newcommand{\cov}{\mathrm{Cov}}
\newcommand{\ind}{\perp\!\!\!\!\perp}
\newcommand{\F}{\text{\tiny F}}
\newcommand{\PEHE}{\text{\tiny PEHE}}
\title{Conditional Average Treatment Effect Estimation Under Hidden Confounders}
\author[1]{Ahmed Aloui}
\author[1]{Juncheng Dong}
\author[1,2]{Ali Hasan}
\author[1]{Vahid Tarokh}
\affil[1]{%
Department of Electrical and Computer Engineering, Duke University}
\affil[2]{%
    Machine Learning Research, Morgan Stanley
}
\begin{document}
\maketitle

\begin{abstract}
One of the major challenges in estimating conditional potential outcomes and conditional average treatment effects (CATE) is the presence of hidden confounders. Since testing for hidden confounders cannot be accomplished only with observational data, conditional unconfoundedness is commonly assumed in the literature of CATE estimation. Nevertheless, under this assumption, CATE estimation can be significantly biased due to the effects of unobserved confounders. In this work, we consider the case where in addition to a potentially large observational dataset, a small dataset from a randomized controlled trial (RCT) is available. 
Notably, we make no assumptions on the existence of any covariate information for the RCT dataset, we only require the outcomes to be observed. We propose a CATE estimation method based on a pseudo-confounder generator and a CATE model that aligns the learned potential outcomes from the observational data with those observed from the RCT. Our method is applicable to many practical scenarios of interest, particularly those where privacy is a concern (e.g., medical applications). Extensive numerical experiments are provided demonstrating the effectiveness of our approach for both synthetic and real-world datasets.
\end{abstract}

\section{Introduction}
\label{sec:intro}
Estimating treatment effects is of significant interest to various scientific communities, such as in medicine~\citep{glass2013causal,feuerriegel2024causal} and social sciences~\citep{imbens2015causal,imbens2024causal} for assessing the efficacy of a policy. 
Recently, various methods have been developed using machine learning to estimate individual-level treatment effects, also known as the \emph{conditional average treatment effects} (CATE)~\citep{shalit,alaa,athey,shi2019adapting,guo2023estimating,schweisthal24a,fang2024causalstonet}.
While these methods have proven successful, their effectiveness in estimating treatment effects can be significantly compromised in real-world applications due to the confounding problem\citep{kallus2019interval,chor2024three}. Confounders are variables that influence both the treatment and the outcome. If not properly controlled for, they can severely bias the potential outcome and treatment effect estimations~\citep{rosenbaum1983}. While it is well-established that treatment effects are identifiable under the assumption of \emph{conditional unconfoundedness} (that is, no hidden confounders), \emph{estimating conditional treatment effects becomes much more challenging under {unobserved} confounders}~\citep{imbens2015causal,kallus2018confounding}. 
In some ideal scenarios like Randomized Controlled Trials (RCTs), conditional unconfoundedness might be achieved by design. However, these experiments often require an expensive data collection process. Furthermore, \emph{the conditional unconfoundedness assumption is inherently not falsifiable from observational data alone}~\citep{popper2005logic}. For instance, passively collected healthcare databases often lack essential clinical details that can influence treatment decisions made by both doctors and patients, such as subjective evaluations of the severity of a condition or personal lifestyle factors. Consequently, when applying causal inference models to observational data, it is common to assume conditional unconfoundedness, which may fail to hold in practice and cannot be tested. This can cause significant bias in potential outcome estimation.

\textbf{Problem Setting.} In this work, we propose a novel approach to mitigate the bias in estimating CATE under hidden confounders.
Our analysis begins by considering a scenario in which both observational data and RCT data are present -- a common situation in many fields, such as in healthcare, where large observational datasets with rich features (e.g., electronic health records) are readily available, but RCTs are expensive and often too small to support complex models for learning CATE.
In particular, we consider scenarios where only the outcomes from a small batch of RCTs are available alongside observational datasets, circumventing the requirements for individual covariates from RCTs. Such scenarios are plausible in real-world applications where:

\begin{itemize}
    \item \textit{Privacy restrictions:} Full access to detailed features may be unavailable due to privacy concerns. For example, covariates cannot be shared between institutions. Consider a European hospital and a U.S. hospital collaborating on cancer treatment outcomes. Due to GDPR in Europe and HIPAA in the U.S., raw patient features (e.g., genetic profiles, detailed medical history) cannot be shared. However, aggregated outcomes (e.g., survival rate, remission status) can be exchanged. In this case, the goal is to use the available outcomes from the partner site to balance and deconfound a model trained on local (observational) patient features, without ever accessing the full covariates from the RCT data.
    
    \item \textit{Mismatched covariates} between RCT and observational data: For example, an RCT studying the effect of a new diabetes intervention may have collected detailed clinical measurements (e.g., insulin sensitivity markers), while the current hospital EHR system includes only demographic and basic lab results (e.g., A1C, fasting glucose). In this case, we can't directly use the RCT model because the features don’t align, but we can still use the outcomes from the RCT to help deconfound the observational data, e.g., by regularizing predictions or mitigating confounding.
    
    \item \textit{Old RCT outcomes} with no collected features: For instance, an old clinical trial for a hypertension drug conducted in the 1990s might have preserved only treatment assignments and blood pressure outcomes, while detailed patient covariates (e.g., age, BMI, comorbidities) were not digitized or are no longer accessible. Meanwhile, modern electronic health records (EHRs) include rich observational features but lack experimental data. In this case, we can use the outcome distribution from the old RCT to improve robustness or reduce confounding in models trained on the newer EHR data.
\end{itemize}


\textbf{Method.} Our proposed method consists of two regularization modules, based on the given outcomes from RCT data, to regularize the search space of hypothesis to prevent bias due to hidden confounders. We note that the proposed regularization modules are CATE model-agnostic, that is, they can be added to any Neural Net-based CATE estimation model.

Marginals Balancing (MB): The first regularization builds on the key fact that the RCT outcomes can be considered as samples from the true potential outcomes. Motivated by this, we use a pseudo-confounder generator to emulate the hidden confounders, based on which the CATE models' predicted potential outcomes should equal in distribution to the observed outcomes from RCT data.

Projections Balancing (PB): The second approach is based on the observation that the projection of the learned potential outcomes onto any transformation of the features should correspond to that of the true potential outcomes on the same transformation.
 
\begin{figure}
\centering
\includegraphics[width=0.35\textwidth]{./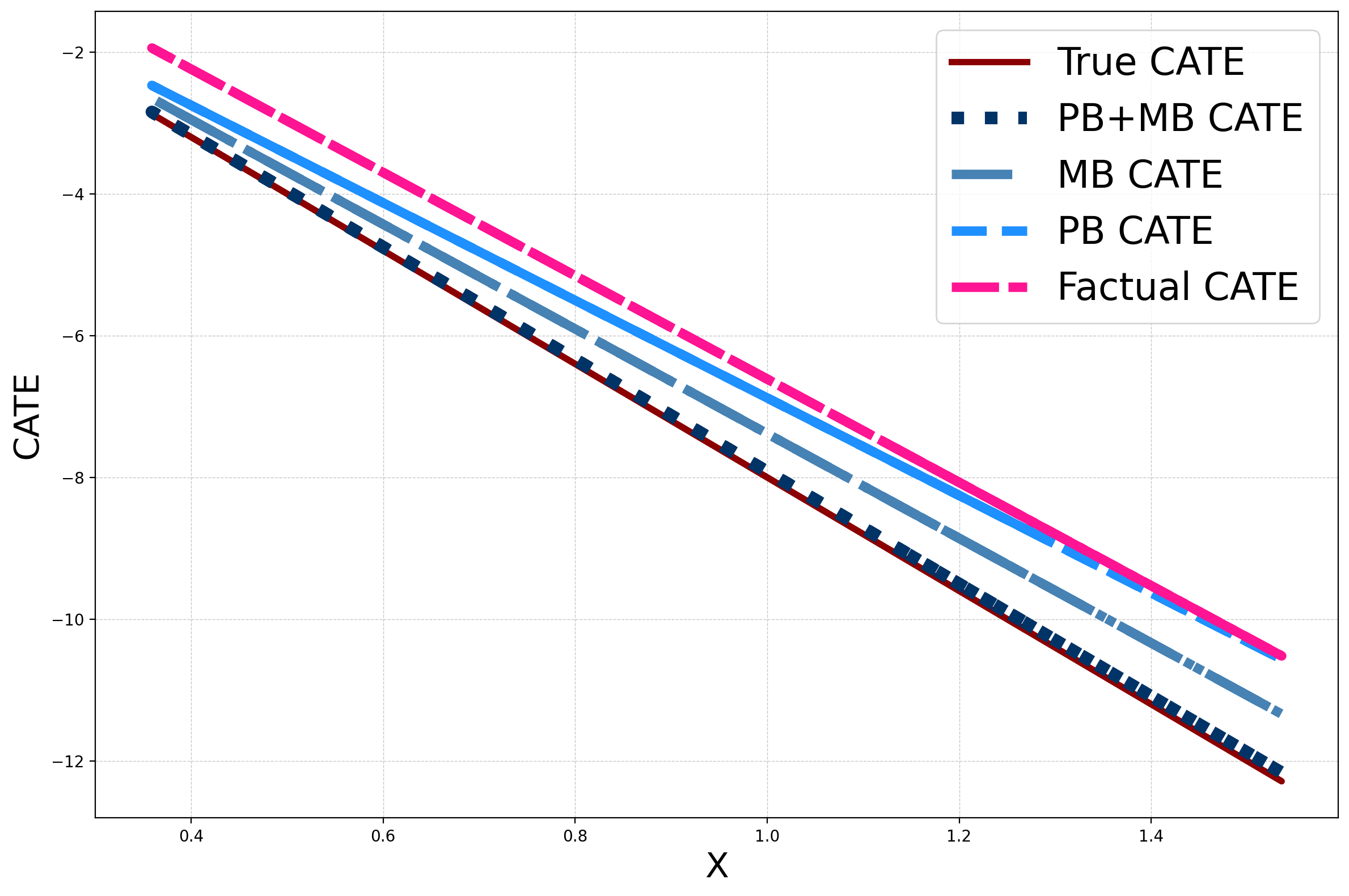}
\caption{Comparison of CATE estimates using the factual learner, the MB and PB models, and MB+PB.}
\label{Fig:cate_linear}
\end{figure}
Our final model (MB+PB) combines both approaches, as we numerically observe that doing so restricts the search space for the factual optimization problem and achieves the best performance. We illustrate the performance of these different models on a simple Gaussian linear model in Figure \ref{Fig:cate_linear}. See Section \ref{example:factual_learner_bias} for a full description of this example. 
Figure~\ref{fig:vis-intro} provides a high-level illustration of the proposed approach.
\begin{figure*}[ht]
\centering
\begin{subfigure}{0.74\textwidth}
    \includegraphics[width=\textwidth]{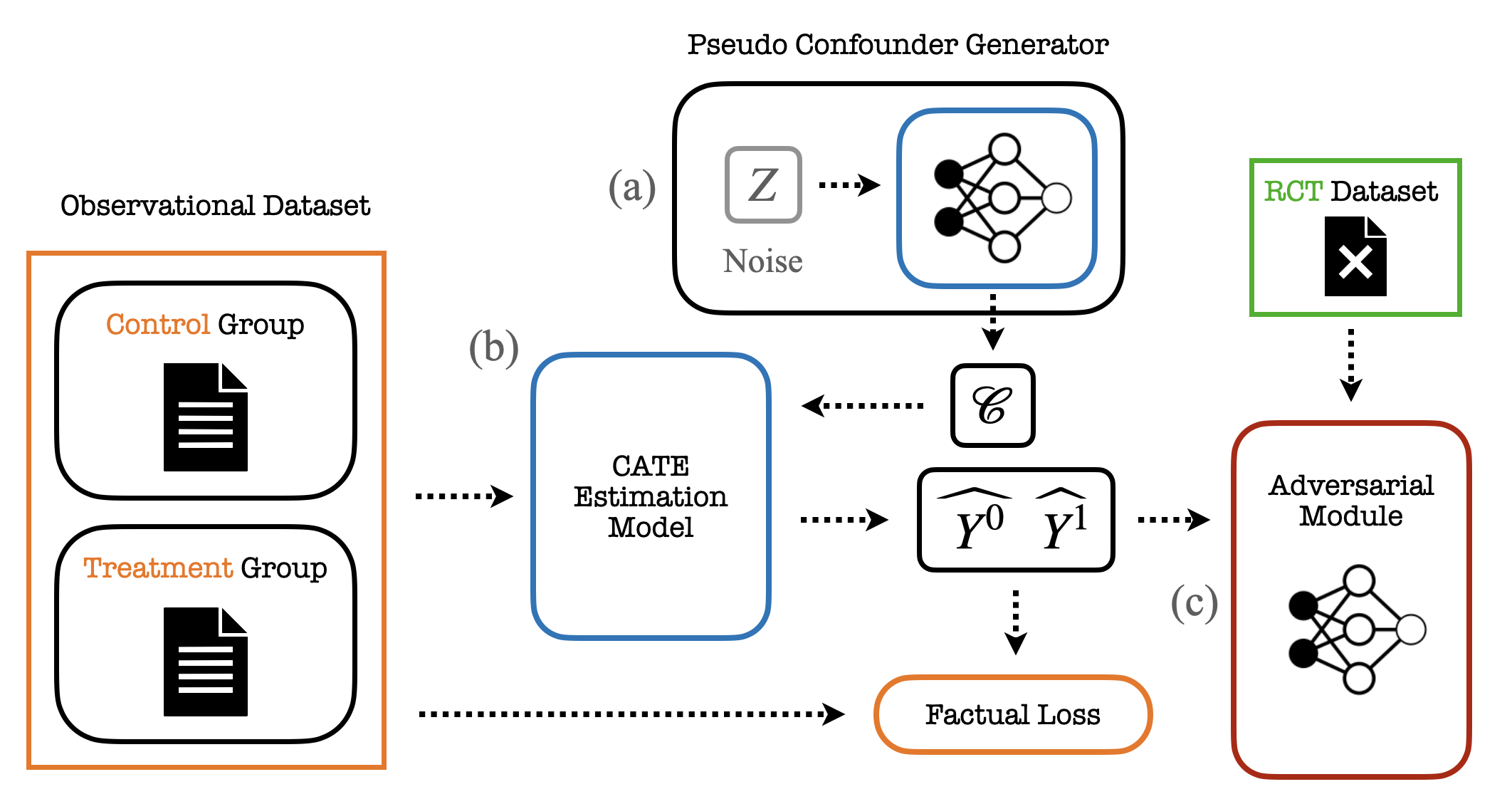}
        \caption{Training procedure.}
        \label{fig:fig1train}
\end{subfigure}
\hfill
\begin{subfigure}{0.24\textwidth}
    \includegraphics[width=\textwidth]{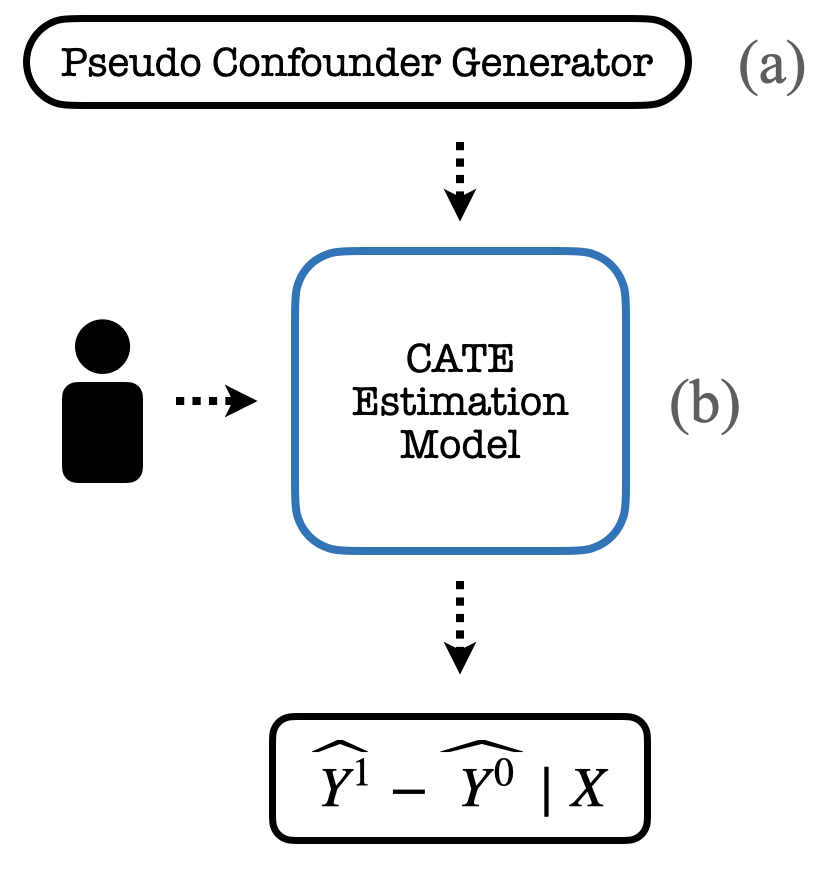}
    \caption{Inference procedure.}
            \label{fig:fig1infer}
\end{subfigure}
\caption{Schematic of the proposed training and inference procedures. \textbf{(i)}: (a) generates pseudo confounders that are used within the CATE estimator using the observational data. Potential outcomes are then matched to the unconfounded RCT dataset in (c). \textbf{(ii)}: inference is performed by (a) sampling from the pseudo-confounder generator and (b) using the CATE model with the individual's features.}
\label{fig:vis-intro}

\end{figure*}

\paragraph{Related Works}
Several recent works address the challenge of estimating treatment effects under unobserved confounding by combining randomized controlled trials (RCTs) with observational data. Some approaches leverage the internal validity of RCTs and how representative observational data is using techniques such as weighting and doubly robust estimators \citep{colnet2024causal}. Other methods propose a linear correction term to adjust for confounding bias \citep{kallus2018removing}. Methods have also been developed for estimating heterogeneous treatment effects, requiring covariate-level data for improved accuracy and balancing the representation of different observed features~\citep{hatt2022combining}. \citet{kallus2019interval} introduce interval estimation for CATE under unobserved confounders and the marginal sensitivity model \citep{rosenbaum2002}. It is important to note that all of these methods assume that both individual covariates and outcomes from the RCTs are accessible, which differs from the assumptions of our approach, as we assume that only the outcomes of the RCT are observed. Other methods have explored specific scenarios for estimating CATE from multiple datasets, such as in recommendation systems \citep{li2024removing} or sequential observational data \citep{hatt2024sequential}. Moreover, recent works have addressed the confounding introduced by applying representation learning approaches to CATE estimation~\citep{melnychuk2024bounds}. Additionally, our work is closely related to sensitivity analysis under hidden confounders. While previous works~\citep{oprescu2023b,veitch2020sense} study the error in estimating CATE under hidden confounding, we investigate potential improvements in CATE estimation when RCT outcomes are available.


\section{Problem Setup}
\label{sec:background}

Let $\left(\Omega, \mathcal{F}, \mathbb{P}\right)$ be a probability space. Consider random variables $\left(X, U, T, Y_1, Y_0\right)$ defined on $\left(\Omega, \mathcal{F}, \mathbb{P}\right)$, where $T$ is a binary random variable denoting treatment assignment, $X \in \mathcal{X} \subset \mathbb{R}^d$ represents the observed features and $U \in \mathcal{U} \subset \mathbb{R}^m$ represents unobserved confounders. The potential outcomes $Y_1, Y_0 \in \mathbb{R}$ correspond to the outcomes under treatment and control, respectively. Let $Y$ represent the observed outcome defined as \citep{hernan2020causal}\footnotemark:
\begin{equation*}
Y = T Y_1 + (1 - T) Y_0 .
\end{equation*}

Figure \ref{Fig:conf_graph} illustrates the causal graph of these variables.

\begin{figure}
\centering
\includegraphics[width=0.2\textwidth]{./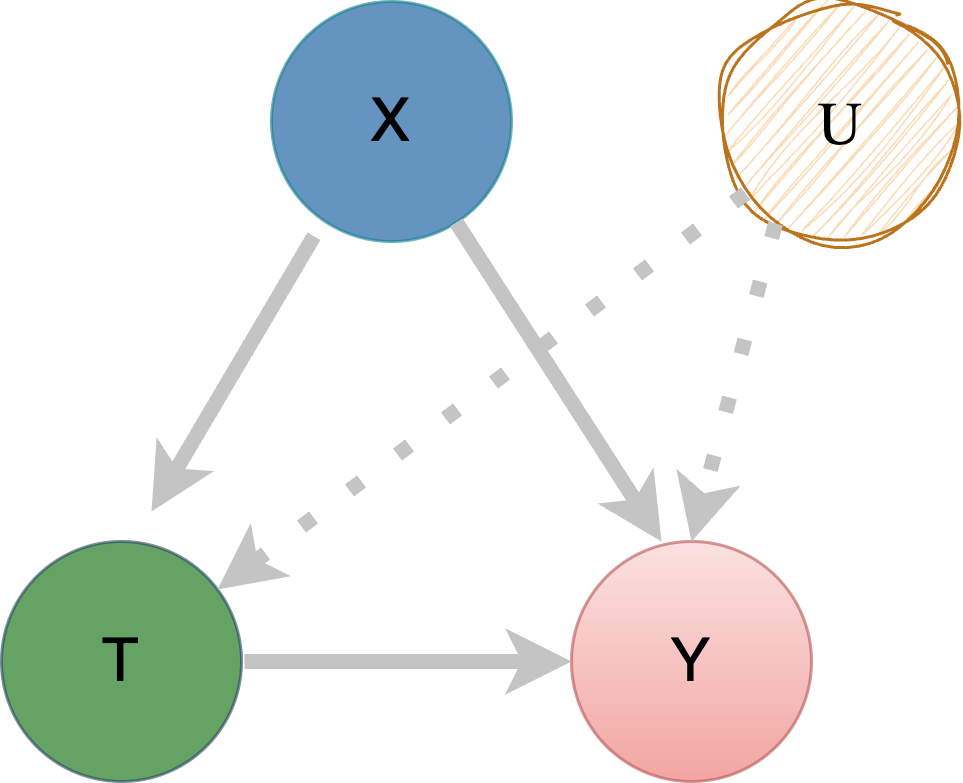}
\caption{Causal graph with unobserved confounders (U).}
\label{Fig:conf_graph}
\end{figure}
\footnotetext{Some references take an alternative approach by first defining the factual outcome and then using the consistency assumption to define the potential outcomes.}

\paragraph{Observational Data.} In real scenarios we do not have access to $U, Y_1,$ or $Y_0$ --- which gives rise to one of the most fundamental challenges in causal inference. Instead, we only have access to samples of the random triplet $\left(X, T, Y\right)$. Thus, we assume an observational dataset $D_{o} = \{\left(x_{i}, t_{i}, y_{i}\right)\}_{i=1}^{n_o}$, consisting of $n_o$ independent observations. 

\paragraph{CATE Estimation.} The objective is to estimate the conditional potential outcomes $\mathbb{E}\left[Y_t \mid X\right]$ for $t \in \{0, 1\}$ and CATE $\tau(X)$, defined as: $$\tau(X) = \mathbb{E}\left[Y_1\mid X\right]-\mathbb{E}\left[Y_0\mid X\right].$$ 
To this end, we make the standard assumption of \emph{positivity}, that is, $P(T = 1 \mid X) > 0$ almost surely. We also assume that $X\ind U$, which is verified by the causal graph in Figure~\ref{Fig:conf_graph}. Moreover, to identify CATE, it is common to assume \emph{conditional unconfoundedness}, that is, $Y_t \ind T \mid X$. While it is well established in the causal inference literature that CATE is identifiable under the assumption of conditional unconfoundedness, this assumption does not hold in the presence of hidden confounders. Without conditional unconfoundedness, CATE is generally not identifiable~\citep{rosenbaum1983,imbens2015causal}. Hidden confounders, which are common in practice, always lead to a violation of the conditional unconfoundedness assumption. Therefore, we focus on scenarios where the conditional unconfoundedness assumption is violated. Specifically, for $t \in \{0,1\}$, we assume $Y_t \not\ind T \mid X$, i.e., the treatment assignment is not independent of the potential outcomes given the observed features due to the presence of unobserved confounders $U$.


\paragraph{Performance Metric.} Let $\hat{\tau}(x) = h(x,1)-h(x,0)$ denote an estimator for CATE where $h$ is a hypothesis $h: \mathcal{X}\times\{0,1\}\rightarrow \mathcal{Y}$ that estimates the conditional potential outcomes $\mathbb{E}\left[Y_t|X=x\right]$.
\begin{definition}[PEHE]
\label{def:epehe}
\textit{
The Expected Precision in Estimating Heterogeneous Treatment Effect (PEHE) ~\citep{hill2011} is defined as:
\begin{equation}
\begin{aligned}
    \varepsilon_{\PEHE}(h)&=\int_{\mathcal{X}}(\hat{\tau}(x)-\tau(x))^2 p(x) dx
\end{aligned}
\end{equation}
where $p(x)$ is the marginal density of the covariates $X$.
}
\end{definition}

The $\varepsilon_{\PEHE}$ is widely used as the performance metric for CATE estimation, especially in scenarios where heterogeneous effects are present across different individuals. 


\paragraph{RCT Data.} Given that the bias of hidden confounders cannot even be tested with observational data, we assume access to a small batch of RCT data. In particular, we assume access to only the outcomes of RCT data, instead of the stronger requirement of observing covariates. Let the outcome-only RCT data be denoted as $\left(T_{r}, Y_{r}\right)$ and let $u=\mathbb{P}(T_r=1)$. The data generating process of the RCT data is equivalent to the following process: Consider two random variables $Y'_1$ and $Y'_0$ which are equal in distribution to the true potential outcomes $Y_1$ and $Y_0$\footnotetext{This assumption simplifies the theoretical analysis, while our empirical results cover cases with distribution shifts.}, respectively.  Then with probability $u$, we have one sample of $Y'_1$; with probability $1-u$, we have one sample of $Y'_0$. 

We denote the RCT dataset as $D_r= \{D^0_r,D_r^1\}$ where $D^t_r=\{y^t_j\}^{n_r^{t}}_{j=1}$ for $t \in \{0,1\}$. In particular, $D^0_r$ and $D_r^1$ contain $n_r^1$ and $n_r^0$ samples from $Y'_1$ and $Y'_0$. 

\begin{tcolorbox}[colback=lightgray!20]
The central question we explore in this work is how to apply knowledge about the marginal distributions of the true potential outcomes to help reduce the estimation error of the conditional potential outcomes and CATE under hidden confounders.
\end{tcolorbox}
We note that, to simplify the mathematical analysis, we assume that the potential outcomes in the RCT and observational data are sampled from the same distribution. However, we relax this assumption in our empirical evaluation. Additionally, standard transfer learning and domain adaptation bounds can be derived for scenarios with distribution shifts.

\paragraph{Confounding Degree.} Additionally, we explore how the \emph{confounding degree}—that is the influence of the unobserved confounder on the treatment assignment—affects the estimation performance.  
To quantify the degree of unobserved confounding, we employ the commonly used Marginal Sensitivity Model(MSM)~\citep{rosenbaum2002}. MSM represents a general class of functions that satisfy the \emph{\(\Gamma\)-selection bias condition} defined as follows. 

\begin{definition}[\(\Gamma\)-selection bias condition]
\label{def:msm}
\textit{
A probability measure $\mathbb{P}$ satisfies the \(\Gamma\)-selection bias condition with \(1 \leq \Gamma < \infty\) if, for \(\mathbb{P}\)-almost all \(u, \tilde{u} \in \mathcal{U}\) and \(x \in \mathcal{X}\), the following holds:
let $\pi(x,u) = \frac{\mathbb{P}(T = 1 \mid x, U = u)}{\mathbb{P}(T = 0 \mid x, U = u)}$ and $\pi(x,\tilde{u})= \frac{\mathbb{P}(T = 1 \mid x, U = \tilde{u})}{\mathbb{P}(T = 0 \mid x, U = \tilde{u})}$, then \begin{equation}
\frac{1}{\Gamma} \leq \frac{\pi(x,u)}{\pi(x,\tilde{u})} \leq \Gamma.
\end{equation}
}
\end{definition}

The confounding degree is defined as the \emph{minimum value} of $\Gamma$ that satisfies the \(\Gamma\)-selection bias condition. Specifically, the $\Gamma$-selection condition is satisfied when the odds ratio of receiving the treatment can change by up to a factor of $\Gamma$ as the unobserved confounder $U$ varies, while the observed features remain fixed. Note that when $\Gamma = 1$, this corresponds to the case where $U$ has no effect on the likelihood of treatment assignment given the observed features. 
\section{Proposed Approach}
\label{sec:approach}
In this section, we present two models designed to address the challenge of estimating conditional potential outcomes and the CATE in the presence of hidden confounders. 
To help understand the challenge of hidden confounders, we first discuss in Section~\ref{sec:factual_learner} with a case study about the issue that arises on the baseline factual learner which relies solely on the observational data in the presence of hidden confounders.
Next, we introduce our two approaches: Marginals Balancing (MB) in Section~\ref{sec:mb} and Projections Balancing (PB) in Section~\ref{sec:pb}.
Both approaches are designed to mitigate bias, though they are based on distinct principles.
Finally, in Section~\ref{sec:algo}, we describe our combined model, MB+PB, which integrates both approaches to improve CATE estimation under hidden confounding. 

\subsection{Factual Learner}
\label{sec:factual_learner}
In the context of conditional potential outcome estimation with observational data, it is standard to solve the following optimization problem based on the observed outcome:
\begin{equation}\label{eqn:std-opt}
\min_{Z_1, Z_0  \; \sigma(X)\text{-measurable}} \mathbb{E}\left[\left(Z_T - Y\right)^2\right],
\end{equation}
where $\sigma(X)$ denotes the \emph{$\sigma$-algebra} generated by $X$, $Z_1, Z_0 \in \sigma(X)$, $T\in\{0,1\}$, and $Z_T = Z_1 \mathbbm{1}_{T=1} + Z_0 \mathbbm{1}_{T=0}$. It is well-established ([Theorem 4.1.15]~\citep{durrett2019probability}) that the unique optimal solution (up to a measure zero set) is $$\forall t \in \{0,1\},  Z_t = Z^F_t \triangleq \mathbb{E}\left[Y|X,T=t \right], $$ 
which we will refer to as the factual learner. 
On the other hand, the goal in causal inference is to learn the conditional potential outcomes $\mathbb{E}\left[Y_t|X\right]$ for $t \in \{0,1\}$, from which CATE can be computed. Note that under conditional unconfoundedness, we have $Z^F_t=
 \mathbb{E}\left[Y_t|X\right]$. 

However, when conditional unconfoundedness is violated, the solution $Z^F_t$  to the standard optimization problem in Equation \ref{eqn:std-opt} does not necessarily equal to $\mathbb{E}\left[Y_t|X\right]$. In other words, the equality $\mathbb{E}\left[Y_t|X\right] = \mathbb{E}\left[Y|X, T=t\right]$ does not necessarily hold. In such cases, the observed data does not provide an accurate estimate of the true treatment effect due to the influence of hidden confounders.

\paragraph{Case Study.} \label{example:factual_learner_bias} To empirically illustrate the bias induced by the factual learner, consider the following example. Let the covariate \( X \) and the hidden confounder \( U \) follow normal distributions where 
\[
X \sim \mathcal{N}(1.0, 0.04)\; \text{and}\; U \sim \mathcal{N}(0, 1).
\]
The treatment assignment \( T \) is determined by a logistic model that depends on both \( X \) and the unobserved confounder \( U \):
\[
P(T = 1 | X, U) = \frac{1}{1 + \exp(-0.5 X - 2 U)},
\]The potential outcomes are modeled as linear functions of \( X \) and \( U \):
\[
Y_1 = -3.5 X + 3 U, \quad Y_0 = 4.5 X -0.6 U.
\]
The observed outcome \( Y \), given by $Y = T Y_1 + (1 - T) Y_0$, depends on the treatment assignment \( T \). 

We sample $1000$ samples from $(X,T,Y)$, which is more than sufficient for such a simple problem in a low-dimensional setting, and fit two linear regression models separately on the treatment ($T=1$) and control ($T=0$) groups, allowing us to estimate the factual learners \( \mathbb{E}[Y | X, T=0] \) and \( \mathbb{E}[Y | X, T=1] \).
In Figure~\ref{Fig:linear_bias}, we compare the factual learner with the true potential outcomes \( \mathbb{E}[Y_t | X] \).
This comparison reveals the bias inherent in the factual learner due to the unobserved confounder \( U \).
In the following sections, we propose two different approaches to alleviate the confounding effect when access to the outcomes of an RCT dataset is available.
\begin{figure}
\centering
\includegraphics[width=0.33\textwidth]{./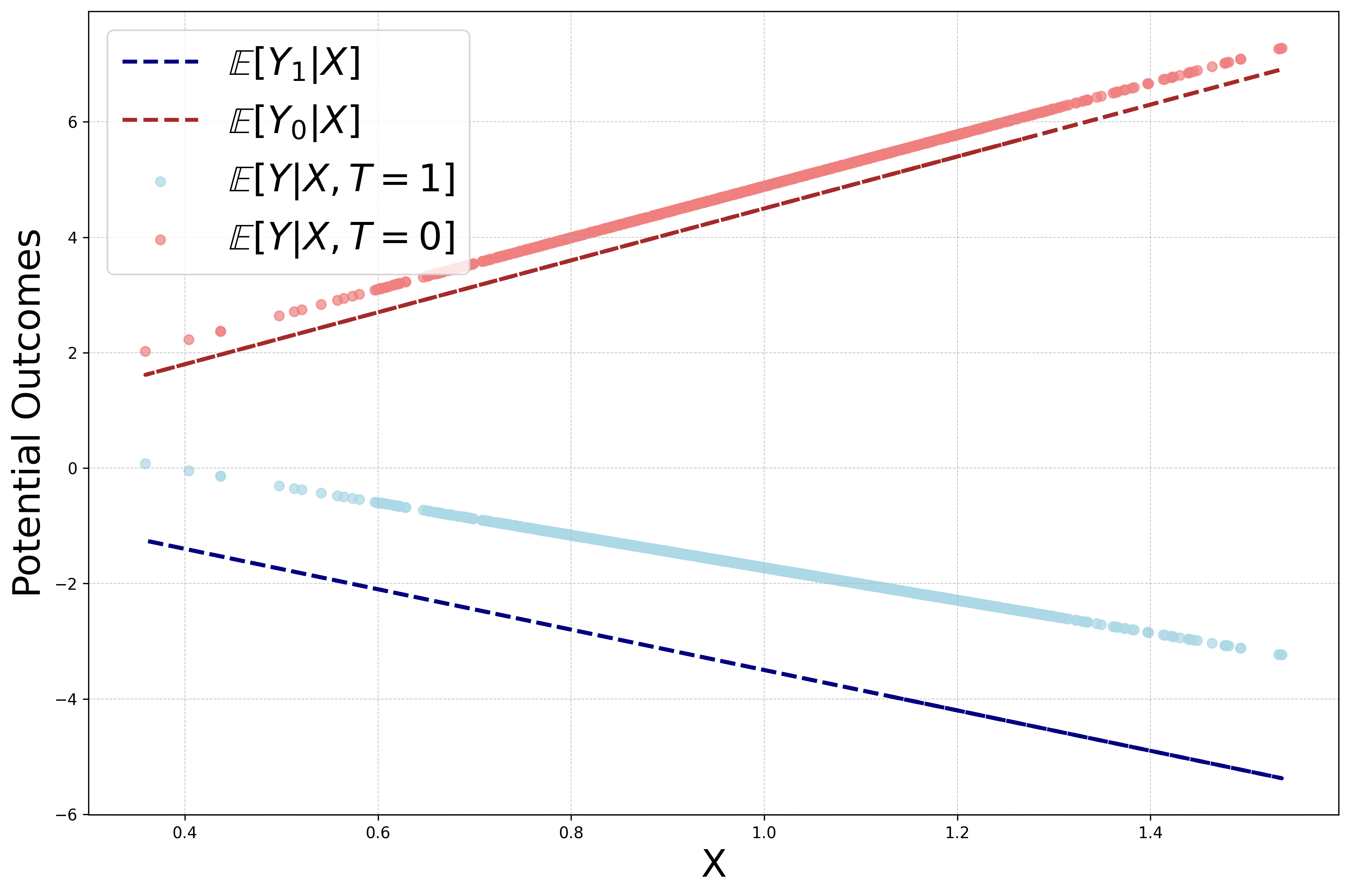}
\caption{Comparison between the baseline factual learner and the true conditional potential outcomes for a linear Gaussian model. }
\label{Fig:linear_bias}
\end{figure}

\subsection{Marginals Balancing}\label{sec:mb}
\paragraph{Motivation.} To motivate our first model, we begin by observing that the true conditional potential outcomes, 
\(\mathbb{E}[Y_1 | X]\) and \(\mathbb{E}[Y_0 | X]\), should ideally correspond to the projection of a random variable sharing the same distribution as the true potential outcomes \(Y_1\) and \(Y_0\). Specifically, since the true potential outcome \(Y_t\) depends on both the covariates \(X\) and the hidden confounders \(U\), we propose models of the form:
$$
\tilde{Y_t} = f_t(X, \tilde{U}),
$$
where \(f_t: \mathbb{R}^d \times \mathbb{R} \to \mathbb{R}\), and \(\tilde{U} \in \mathbb{R}\) is a random variable representing the \emph{pseudo-confounder}. As  motivated in Section~\ref{sec:background}, given the knowledge of the marginal distribution of $Y_t$ (from the RCT outcomes), it is natural to impose the following constraint: 
\begin{equation}\label{eqn:eq-con}
    \tilde{Y_t} \stackrel{d}{=} Y_t,
\end{equation}
where $\stackrel{d}{=}$ denotes equality in distribution. Thus, the model $\tilde{Y_t}$ should interpolate the observational data under the constraint in Equation~(\ref{eqn:eq-con}).

\paragraph{Method.} Our first approach, which we refer to as the \emph{Marginals Balancing} (MB), follows this observation and can be formalized through the following optimization problem:
\begin{definition}[Optimization Problem of MB] 
\textit{
Let \(\mathcal{B}(\mathbb{R})\) denote the set of real-valued continuous and bounded functions. MB solves the following optimization problem:
    \begin{equation}
    \min_{Z_1, Z_0 \; \sigma(X)\text{-measurable}} \mathbb{E}\left[\left(Z_T - Y\right)^2\right],
\end{equation}
where, for \(t \in \{0, 1\}\),  
$Z_t = \mathbb{E}\left[f_t(X, \Tilde{U}) | X\right]$ for some function \(f_t: \mathbb{R}^d \times \mathbb{R} \to \mathbb{R}\) and a random variable $\Tilde{U} \in \mathbb{R}$ that conform to the following constraint:
\begin{equation}
\label{eq:mb_constraint}
\forall t \in \{0, 1\}, \forall \tilde{g} \in \mathcal{B}(\mathbb{R}), \quad \mathbb{E}\left[\tilde{g}(f_t(X,\Tilde{U}))\right] = \mathbb{E}\left[\tilde{g}(Y_t)\right].
\end{equation}
}
\end{definition}
Note that the constraint in Equation~(\ref{eq:mb_constraint}) implies the constraint in Equation~(\ref{eqn:eq-con}) due to the Portmanteau Lemma~\citep{billingsley1995probability}. It is important to also note that $\mathbb{E}\left[\tilde{g}(Y_t)\right]$ can be estimated with the outcomes in the RCT data because they can be considered as samples of a random variable $Y'_t$ that equal in distribution to $Y_t$.  

\paragraph{Implementation.} To solve the optimization problem of MB, we generate the pseudo-confounder $\Tilde{U}$ using a neural network $\psi$, and fit a CATE estimation model $\mu_t(X,\Tilde{U})$, with the observed covariates along with the generated pseudo-confounder as inputs, to predict the observed outcomes in the observational dataset $D_o$. Moreover, we enforce that the predicted potential outcomes match the true potential outcomes in distribution. We achieve this by adversarial training, where we instantiate \(\mathcal{B}(\mathbb{R})\) with a neural net, and update its parameter to maximize the $L_2$ distance between the right-hand side and the left-hand side of the equality in Equation~(\ref{eq:mb_constraint}), estimated through the RCT data $D_r$.  

\paragraph{Empirical Illustration.} Figure~\ref{Fig:deconfounding_linear_a} illustrates the performance of MB model on the case study in Section~\ref{example:factual_learner_bias}. We can observe that the gap between the true conditional potential outcomes and the predicted potential outcomes is indeed reduced compared to the factual learner.

\paragraph{Limitation.} One notable limitation of the marginal balancing method is that the optimal solution to the MB optimization problem is not unique. Moreover, for certain classes of functions, it is possible to construct an optimal solution under the imposed constraint that does not recover the true conditional potential outcomes, as demonstrated by the example provided in Appendix~\ref{proofs}.
\begin{figure}
\centering
\includegraphics[width=.33\textwidth]{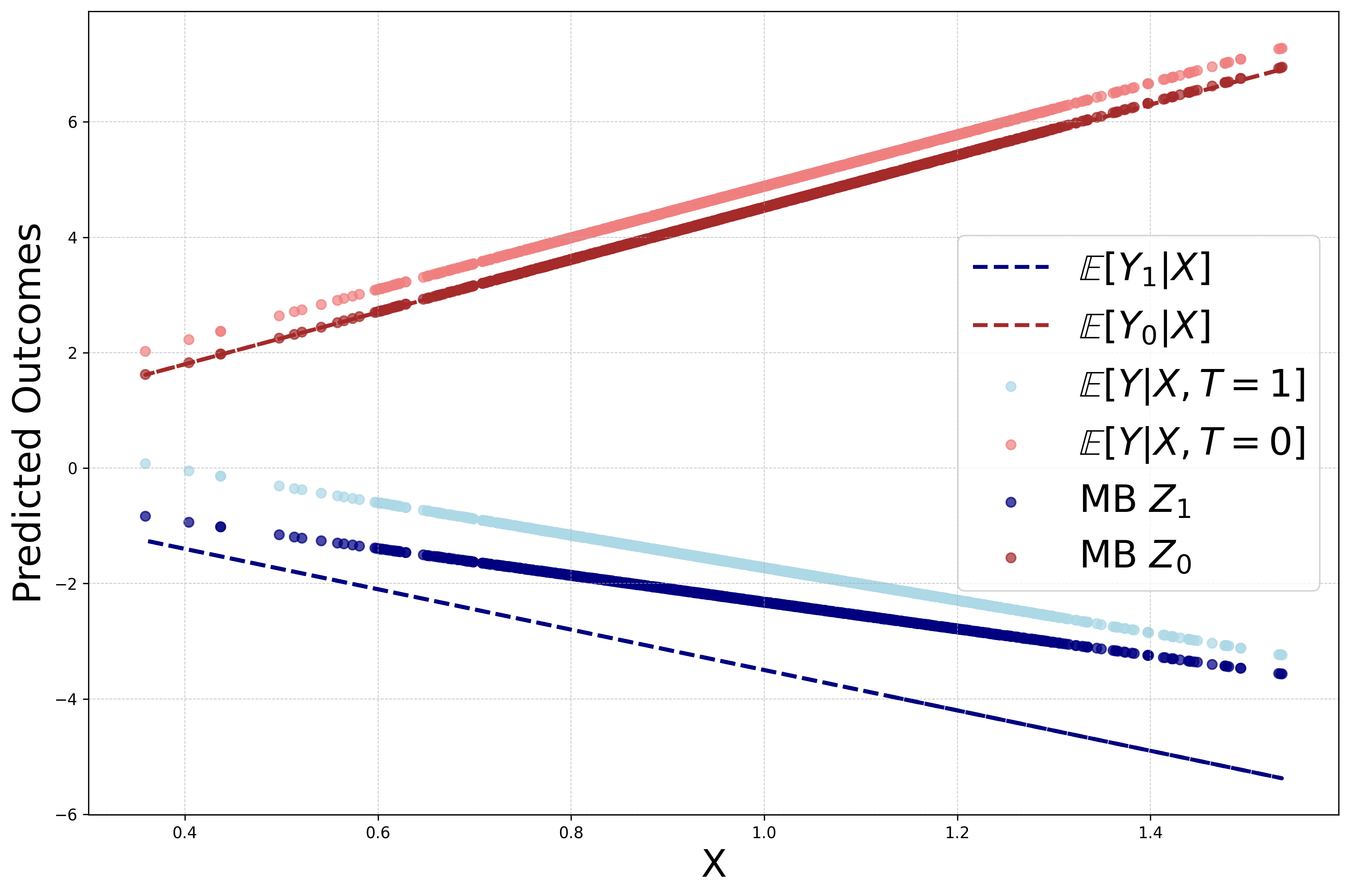}
\caption{Comparison of the factual learner and MB model with the true conditional potential outcomes.}
\label{Fig:deconfounding_linear_a}
\end{figure}
 


\subsection{Projections Balancing}\label{sec:pb}

    

We now introduce our second approach, called \emph{Projections Balancing} (PB). 

To illustrate the benefits of this method, we begin by considering an idealized scenario with direct access to the true potential outcomes $Y_1$ and $Y_0$, rather than relying on the RCT data containing samples of $Y'_1$ and $Y'_0$ which are random variables equal in distribution to $Y_1$ and $Y_0$. In practice, this is unattainable since the treatment assignment biases the distribution of the observed outcomes in observational data. We will later relax this learner under the assumption that only a small subset of RCT outcomes is available. 

We begin with the following result, which presents a constrained optimization problem whose \emph{unique optimal solution is precisely the conditional potential outcome $\mathbb{E}[Y_t|X]$}, the quantity we aim to identify in causal inference.
\begin{proposition}[Ideal PB]
\label{thm:ideal_pb}
Let $\mathcal{G} = \big\{g:\mathbb{R}\to [-1,1], \text{such that $g$ is piece-wise continuous}\big\}$ and consider the following optimization problem:
$$
\min_{Z_1, Z_0 \; \sigma(X)\text{-measurable}} \mathbb{E}\left[\left(Z_T - Y\right)^2\right]
$$
subject to the constraint
$$
\forall g \in \mathcal{G}, \forall t \in \{0, 1\}, \quad \mathbb{E}\left[Z_t g(X)\right] = \mathbb{E}\left[Y_t g(X)\right].
$$
The unique solution for this problem is:
$$
\forall t \in \{0, 1\}, \quad Z_t = \mathbb{E}\left[Y_t \mid X\right].
$$
\end{proposition}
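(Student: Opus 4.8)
The plan is to observe that the quadratic objective plays essentially no role here: the linear moment constraint by itself pins the feasible set down to a single point (up to $\mu_X$-a.s.\ equality), so any feasible pair is automatically the minimizer. I would first verify feasibility of the candidate. For $Z_t = \mathbb{E}[Y_t \mid X]$, the defining property of conditional expectation gives, for every bounded $\sigma(X)$-measurable function --- in particular for every $g(X)$ with $g \in \mathcal{G}$ --- the identity $\mathbb{E}\big[\mathbb{E}[Y_t\mid X]\, g(X)\big] = \mathbb{E}\big[Y_t\, g(X)\big]$. Hence $Z_t = \mathbb{E}[Y_t\mid X]$ satisfies the constraint for both $t \in \{0,1\}$.

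The core of the argument is to show the constraint admits no other feasible point. Let $(Z_1, Z_0)$ be any feasible pair and fix $t$; set $w := Z_t - \mathbb{E}[Y_t\mid X]$. This $w$ is $\sigma(X)$-measurable (so $w = w(X)$ by Doob--Dynkin), lies in $L^2$ of the law $\mu_X$ of $X$ (using square-integrability of the potential outcomes, which is implicit in the least-squares formulation and needed for $\varepsilon_{\PEHE}$ to be finite), and by linearity of the constraint satisfies $\mathbb{E}[w(X)\, g(X)] = 0$ for all $g \in \mathcal{G}$. The goal is to conclude $w = 0$ $\mu_X$-almost surely, which identifies $Z_t$ with $\mathbb{E}[Y_t \mid X]$.

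The subtle point, and the main obstacle, is that we may only test $w$ against piecewise continuous functions of $X$ valued in $[-1,1]$, \emph{not} against an arbitrary $\sigma(X)$-measurable function; in particular we cannot simply substitute $g = \operatorname{sign}(w)$. I would resolve this by a density-and-truncation argument. Fix $M > 0$ and let $w_M := (w \wedge M)\vee(-M)$, so that $w_M/M$ is $[-1,1]$-valued and lies in $L^2(\mu_X)$. Since bounded continuous functions (a fortiori piecewise continuous ones) are dense in $L^2(\mu_X)$ for a Borel measure on $\mathbb{R}^d$ (standard, e.g.\ via Lusin's theorem), and since truncating any approximant to $[-1,1]$ is a contraction that cannot increase its $L^2$-distance to the already $[-1,1]$-valued target $w_M/M$, I can produce $\phi \in \mathcal{G}$ with $\|w_M/M - \phi\|_{L^2(\mu_X)}$ arbitrarily small. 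Combining $\mathbb{E}[w\,\phi] = 0$ with Cauchy--Schwarz, $\big|\mathbb{E}[w\, w_M/M]\big| = \big|\mathbb{E}[w(w_M/M - \phi)]\big| \le \|w\|_{2}\,\|w_M/M - \phi\|_{2} \to 0$, forces $\mathbb{E}[w\, w_M] = 0$.

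Finally I would extract the conclusion from the sign structure of the truncation: pointwise $w\, w_M \ge 0$, and $w\, w_M = w^2$ on the event $\{|w| \le M\}$, so $\mathbb{E}[w\, w_M] = 0$ yields $w = 0$ a.s.\ on $\{|w| \le M\}$. Letting $M \to \infty$ gives $w = 0$ $\mu_X$-a.s., i.e.\ $Z_t = \mathbb{E}[Y_t \mid X]$. Applying this for $t = 0$ and $t = 1$ shows the feasible set is the singleton $\{(\mathbb{E}[Y_1\mid X],\,\mathbb{E}[Y_0\mid X])\}$, which is therefore the unique solution of the constrained problem irrespective of the objective value. The only places demanding care are the $L^2$-density claim for $\mu_X$ on $\mathbb{R}^d$ and keeping every approximant inside $[-1,1]$ so that it remains an admissible member of $\mathcal{G}$.
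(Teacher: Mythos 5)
Your proof is correct, and it takes a genuinely different route from the paper's. The paper keeps the quadratic objective in play: it decomposes $\mathbb{E}[T(Z_1-Y_1)^2]$ around $\mathbb{E}[Y_1\mid X]$ and devotes most of its effort to showing that the cross term $2\,\mathbb{E}\bigl[T\,(Z_1-\mathbb{E}[Y_1\mid X])(\mathbb{E}[Y_1\mid X]-Y_1)\bigr]$ vanishes for every feasible $Z_1$, via a four-case sign analysis on the events $A=\{Z_1>\mathbb{E}[Y_1\mid X]\}$ and $B=\{-\mathbb{E}[Y_1\mid U]>0\}$ that invokes $Y_1\ind T\mid(X,U)$, the independence $X\ind U$, and the constraint tested against $g_A=\mathbbm{1}_A$; it then reads off the minimizer from the remaining square term. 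You instead discard the objective entirely and show the moment constraint alone forces $Z_t=\mathbb{E}[Y_t\mid X]$ a.s., so the feasible set is a singleton. Your version is shorter, needs only the tower property plus square-integrability (no unconfoundedness given $(X,U)$, no $X\ind U$, no positivity), and --- notably --- is the only one of the two that honestly respects the class $\mathcal{G}$ as stated: the paper tests with $\mathbbm{1}_A$ for an arbitrary measurable set $A$, which need not be piecewise continuous (the appendix restatement quietly drops that restriction), whereas your truncation-plus-$L^2$-density step keeps every test function continuous and $[-1,1]$-valued. In fact the key identity in the paper's Case 1, $\mathbb{E}[Z_1 g_A(X)]-\mathbb{E}[Y_1 g_A(X)]=0$, is already the statement that the positive part of $w=Z_1-\mathbb{E}[Y_1\mid X]$ has zero mean, so your argument can be seen as extracting and completing the step that actually does the work in the paper's proof while bypassing the sign-case machinery (and the paper's unjustified rewriting of the cross term as $\mathbb{E}[(Z_1-\mathbb{E}[Y_1\mid X])\pi(X,U)\Psi(U)]$ with $\Psi(U)=-\mathbb{E}[Y_1\mid U]$). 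What the paper's objective-centric decomposition buys in exchange is a template that degrades gracefully when the constraint holds only approximately, which is the form reused for the practical PB bound in Proposition~\ref{thm:practical_pb}; your singleton argument, being exact, does not directly yield that quantitative relaxation. The two points you flag --- $L^2(\mu_X)$-density of continuous functions and keeping approximants in $[-1,1]$ via the $1$-Lipschitz truncation --- are both standard and handled correctly, so I see no gap.
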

\begin{proof}[Proof of Proposition~\ref{thm:ideal_pb}]
See in Appendix~\ref{proofs}.
\end{proof}

\textbf{Method.} We underscore that the most notable advantage of the ideal PB learner is that \emph{it provides a unique solution corresponding to the true potential outcomes}. Without access to the true potential outcomes in practice, we now introduce a practical PB learner by relaxing the proposed ideal PB learner to scenarios where RCT outcomes are available. 
\begin{definition}[Optimization Problem of PB] 
\textit{ Let $\mathcal{C} \in \mathbb{R}^+$ be a positive constant and $\mathcal{G} = \big\{g:\mathbb{R}\to [-1,1]\big\}$. PB has the following optimizing problem:
\begin{equation}\label{eqn:pb-learner}
\begin{aligned}
   &\min_{Z_1, Z_0 \; \sigma(X)\text{-measurable}} \mathbb{E}\left[\left(Z_T - Y\right)^2\right]; \\
   &\mathrm{s.t.} \max_{t \in \{0,1\}} \sup_{g \in \mathcal{G}} \big| \mathbb{E}\left[Z_t g(X)\right] - \mathbb{E}\left[Y'_t g(X)\right] \big| \le \mathcal{C},
\end{aligned}
\end{equation}
where $Y_t'$ is a random variable equal in distribution to the true potential outcome $Y_t$.
}
\end{definition}
In this formulation, the true potential outcomes \( Y_t \) are replaced by the RCT potential outcomes \( Y'_t \). However, since this problem is challenging to optimize, in practice, we employ the optimization duality and optimize the following optimization problem with a penalty term: 
\begin{equation}
\begin{aligned}
    & \min_{Z_1, Z_0 \; \sigma(X)\text{-measurable}} 
    \left( \mathbb{E}\left[\left(Z_T - Y\right)^2\right] \right . \\
    & \left .+ \alpha \sum_{t=0}^{1} \sup_{g \in \mathcal{G}} \big| \mathbb{E}\left[Z_t g(X)\right] - \mathbb{E}\left[Y'_t g(X)\right] \big| \right)
\end{aligned}
\end{equation}
where $\alpha \in \mathbb{R}^+$ is a regularization parameter. We now provide a theoretical guarantee for the PB learner in Equation~(\ref{eqn:pb-learner}), which characterizes the deviation of the predicted conditional potential outcomes from the true conditional potential outcomes.

\begin{proposition}
[Practical Projections Balancing (PB)]
\label{thm:practical_pb}
Let $t\in \{0,1\}$ and define
$$
L_p(Z_t) = \sup_{g \in \mathcal{G}} \;\; | \mathbb{E}\left[Z_t g(X)\right] - \mathbb{E}\left[Y'_tg(X)\right]|
$$
with $Y'_t \overset{d}{=} Y_t $ and $Y'_t \ind Y_t$. 
We have that,
\begin{equation}\label{eqn:error-ub}
    \mathbb{E}\left[|Z_t - \mathbb{E}[Y_t|X]|\right] \leq L_p(Z_t) + \sqrt{\mathrm{Var}(Y_t)},
\end{equation}
where $\sqrt{\mathrm{Var}(Y_t)}$ represents the standard deviation of the potential outcomes. 
\end{proposition}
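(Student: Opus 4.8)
The plan is to prove the inequality directly for an \emph{arbitrary} $\sigma(X)$-measurable $Z_t$ (the statement is an analytic bound, not a property of the optimizer) by testing against a single well-chosen function in the definition of $L_p$ and then controlling the residual with a covariance estimate. Write $m_t(X) = \mathbb{E}[Y_t \mid X]$ for the target conditional potential outcome and take $g^\star(X) = \operatorname{sign}(Z_t - m_t(X))$, which is a legitimate element of $\mathcal{G}$ since it is a $[-1,1]$-valued function of $X$. The $L_1$ error then opens up as an expectation against $g^\star$:
\[
\mathbb{E}\big[|Z_t - m_t(X)|\big] = \mathbb{E}\big[(Z_t - m_t(X))\, g^\star(X)\big] = \mathbb{E}[Z_t\, g^\star(X)] - \mathbb{E}[Y_t\, g^\star(X)],
\]
where the last step uses the tower property $\mathbb{E}[m_t(X)\,g^\star(X)] = \mathbb{E}[\mathbb{E}[Y_t\mid X]\,g^\star(X)] = \mathbb{E}[Y_t\, g^\star(X)]$, valid because $g^\star(X)$ is $\sigma(X)$-measurable.

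Next I would insert the RCT copy $Y'_t$ as a pivot, adding and subtracting $\mathbb{E}[Y'_t g^\star(X)]$, to split the error into a part the constraint controls and a part measuring the mismatch between $Y'_t$ and $Y_t$:
\[
\mathbb{E}\big[|Z_t - m_t(X)|\big] = \underbrace{\big(\mathbb{E}[Z_t g^\star(X)] - \mathbb{E}[Y'_t g^\star(X)]\big)}_{(\mathrm{I})} + \underbrace{\big(\mathbb{E}[Y'_t g^\star(X)] - \mathbb{E}[Y_t g^\star(X)]\big)}_{(\mathrm{II})}.
\]
Term $(\mathrm{I})$ is bounded in absolute value by $L_p(Z_t)$ immediately, since $g^\star \in \mathcal{G}$ and $L_p(Z_t)$ is the supremum of exactly this quantity over $\mathcal{G}$. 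For term $(\mathrm{II})$ I would use that the RCT draw $Y'_t$ is independent of the observational covariate $X$, so $\mathbb{E}[Y'_t g^\star(X)] = \mathbb{E}[Y'_t]\,\mathbb{E}[g^\star(X)] = \mathbb{E}[Y_t]\,\mathbb{E}[g^\star(X)]$ (the last equality from $Y'_t \overset{d}{=} Y_t$). Hence $(\mathrm{II}) = \mathbb{E}[Y_t]\mathbb{E}[g^\star(X)] - \mathbb{E}[Y_t g^\star(X)] = -\operatorname{Cov}(Y_t, g^\star(X))$.

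Finally I would close the bound with Cauchy--Schwarz together with the uniform bound on $g^\star$: since $|g^\star| \le 1$ we have $\mathrm{Var}(g^\star(X)) \le \mathbb{E}[g^\star(X)^2] \le 1$, so $|(\mathrm{II})| = |\operatorname{Cov}(Y_t, g^\star(X))| \le \sqrt{\mathrm{Var}(Y_t)}\,\sqrt{\mathrm{Var}(g^\star(X))} \le \sqrt{\mathrm{Var}(Y_t)}$. Combining the two estimates through the triangle inequality gives $\mathbb{E}[|Z_t - m_t(X)|] \le L_p(Z_t) + \sqrt{\mathrm{Var}(Y_t)}$, which is exactly the claim.

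The step I expect to be most delicate is term $(\mathrm{II})$, specifically the use of $Y'_t \ind X$. The hypotheses listed are $Y'_t \overset{d}{=} Y_t$ and $Y'_t \ind Y_t$, but the factorization $\mathbb{E}[Y'_t g^\star(X)] = \mathbb{E}[Y'_t]\mathbb{E}[g^\star(X)]$ genuinely requires $Y'_t$ to be independent of $X$ — which is precisely the modeling content that the RCT supplies only outcomes and is drawn independently of the observational world; I would state this independence explicitly. A secondary point to justify is that $\mathcal{G}$ is rich enough to realize $g^\star = \operatorname{sign}(Z_t - m_t(X))$ as a $[-1,1]$-valued function of $X$ (and if $\mathcal{G}$ is only dense, to approximate it and pass to the limit, where dominated convergence applies since all integrands are uniformly bounded). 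Everything else is elementary and, notably, the bound holds for every admissible $Z_t$ rather than only the optimal one.
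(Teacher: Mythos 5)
Your proof is correct and follows essentially the same route as the paper's: the same sign-function test element $g^\star = \mathbbm{1}_{\{Z_t - m_t(X) > 0\}} - \mathbbm{1}_{\{Z_t - m_t(X) \le 0\}}$, the same pivot through $Y'_t$, the same independence-based factorization yielding a covariance term, and the same Cauchy--Schwarz bound (the paper invokes Popoviciu's inequality where you use $\mathrm{Var}(g^\star(X)) \le \mathbb{E}[g^\star(X)^2] \le 1$, which is an immaterial difference). Your flag that the argument requires $Y'_t \ind X$ rather than the stated $Y'_t \ind Y_t$ is apt --- the paper's proof uses exactly that factorization without stating the hypothesis explicitly either.
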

\begin{proof}[Proof of Proposition~\ref{thm:practical_pb}]
See in Appendix~\ref{proofs}.
\end{proof}

\begin{remark}
In particular, Equation(~\ref{eqn:error-ub}) provides an upper bound on the error of potential outcome estimation of any estimator $Z_t$. It implies that an estimator with low value of $L_p(Z_t)$ is a good estimator of the true conditional potential outcomes. To this end, note that $L_p(Z_t)$ measures how well the estimator $Z_t$ conforms the PB constraint in Equation~(\ref{eqn:pb-learner}). Thus, a solution to the PB optimization has guaranteed performance. Given that CATE under hidden confounders is not identifiable under general conditions, we conjecture that the standard deviation term in the error bound may not be further reduced due to the \emph{inherent stochasticity of $Y_t$} and the \emph{confounding effects of hidden confounders}.   
\end{remark}

\begin{figure}
\centering
\includegraphics[width=0.33\textwidth]{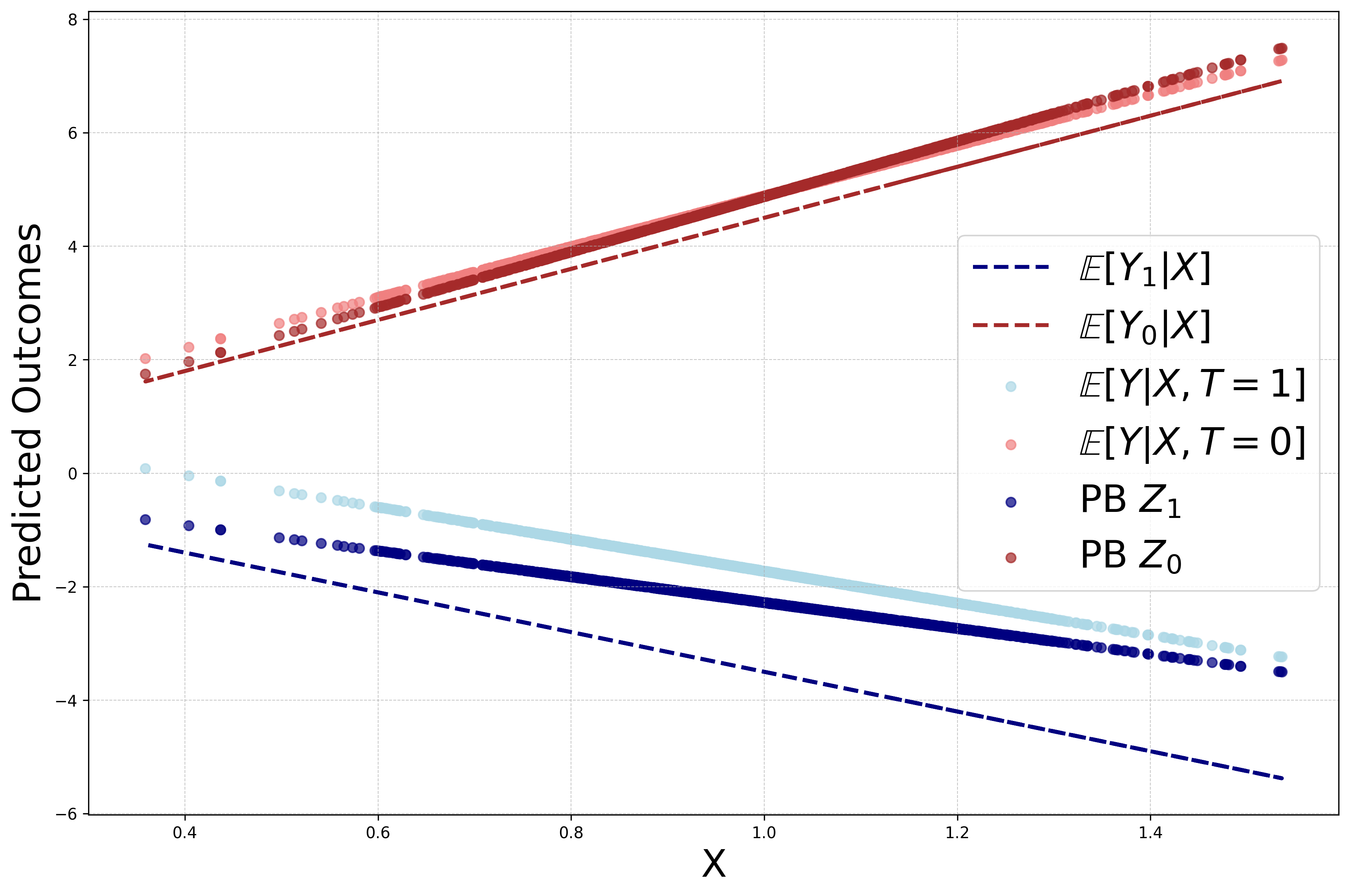}
\caption{Comparison of the factual learner and PB model with the true conditional potential outcomes.}
\label{Fig:deconfounding_linear_b}
\end{figure}



\paragraph{Empirical Illustration.} Figure \ref{Fig:deconfounding_linear_b} illustrates the performance of this model on the synthetic linear example in Section~\ref{example:factual_learner_bias}. We can observe that the gap between the true conditional potential outcomes and the predicted potential outcomes is reduced compared to the factual learner.


\subsection{Algorithm: Marginals + Projections Balancing}\label{sec:algo}
In this section, we present our proposed approach to combine both the Marginals Balancing and Projections Balancing, entitled MB+PB. The rationale behind the effectiveness of our approach is to restrict the search space for the factual optimization objective and to push the solution to get as close as possible to the true conditional potential outcomes. 

\paragraph{Optimization Objective.} The objective function for MB+PB is the following:
\begin{equation*}
    \min_{Z_1, Z_0 \; \sigma(X)\text{-measurable}} \left(\mathbb{E}\left[\left(Z_T - Y\right)^2\right] + \alpha \sum_{t=0}^1 \mathcal{L}_t(f_t)\right),
\end{equation*}
where
\begin{equation}\label{eqn:reg}
\begin{aligned}
    \mathcal{L}_{t}\left(f_t\right) & = \sup_{g \in \mathcal{G}} \Big\| \mathbb{E}\left[f_t\left(X,\Tilde{U}\right)g(X)\right] - \mathbb{E}\left[Y'_t g(X)\right]\Big\|\\ 
    & + \sup_{\tilde{g}\in \mathcal{B}}\Big\| \mathbb{E}\left[\tilde{g}(f_t(X,\Tilde{U})\right] - \mathbb{E}\left[\tilde{g}(Y'_t )\right]\Big\|
\end{aligned}
\end{equation}
and  
$Z_t = \mathbb{E}\left[f_t\left(X,\Tilde{U}\right)| X \right]$
for some function $f_t$ and a random variable $\Tilde{U}$.

\paragraph{Empirical Illustration.} Figure \ref{Fig:deconfounding_linear_c} illustrates the performance of this model on the case study in Section~\ref{example:factual_learner_bias}. We observe that the gap between the true conditional potential outcomes and the predicted potential outcomes is almost entirely reduced. Comparing with the performance of applying MB and PB individually in Figure~\ref{Fig:deconfounding_linear_a} and~\ref{Fig:deconfounding_linear_b}, MB+PB demonstrates significantly superior performance. Motivated by this, we opt for MB+PB as our final approach. 

\paragraph{Training.} We now present below the general procedure to train the model MB+PB for a general class of functions. For all pseudo-code details, check Algorithm \ref{alg:mb_pb}.
\begin{figure}
\centering
\includegraphics[width=0.33\textwidth]{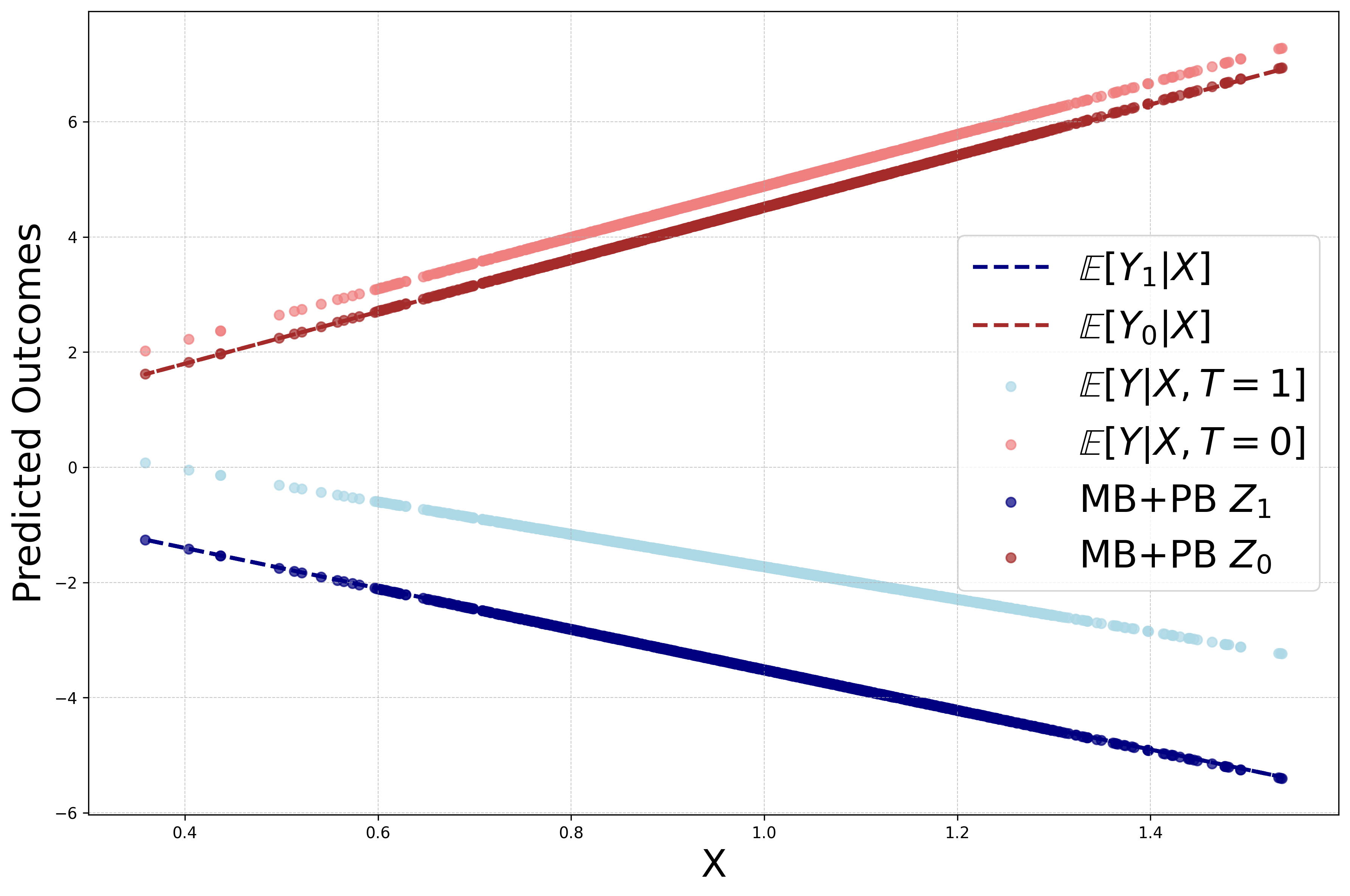}
\caption{Comparison of the factual learner and MB+PB model with the true conditional potential outcomes.}
\label{Fig:deconfounding_linear_c}
\end{figure}


\begin{enumerate}
    \item Pseudo-Confounder Generation. We generate Gaussian noise $\eta \in \mathbb{R}^l \sim \mathcal{N}\left(\mathbf{0}, \mathbf{I}\right)$, where $l$ is the dimension of the generated noise. The noise is passed through a neural network generator $\psi$, and we set $\Tilde{U} = \psi\left(\eta\right)$. 
    \item Potential Outcomes Estimation. Both the features $X$ and the generated pseudo-confounder $\Tilde{U}$ are fed into a neural network-based conditional potential outcomes learner $f_t$ to have the predicted potential outcome $f_t(X,\Tilde{U})$. 
    \item Balancing. Meanwhile, the predicted potential outcomes $f_1(X, \Tilde{U})$ and $f_0(X, \Tilde{U})$ are balanced with the RCT outcomes $Y'_1$ and $Y'_0$, respectively, through the regularization defined in Equation~(\ref{eqn:reg}).
\end{enumerate}



\section{Empirical Results}
\begin{figure*}[t]
    \centering
    \begin{subfigure}{0.3\linewidth}
        \centering
        \includegraphics[width=0.95\linewidth]{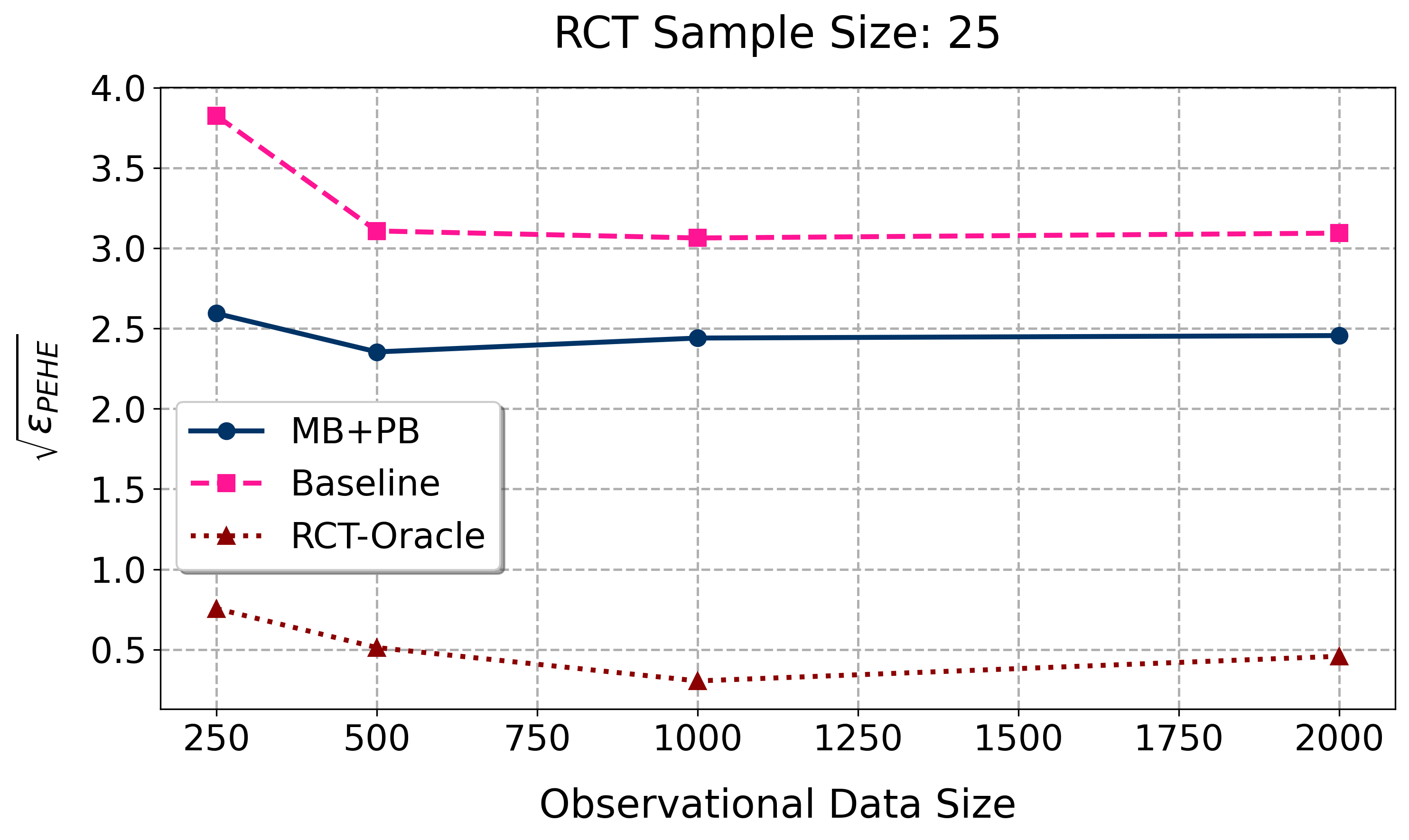}
        \caption{}
        \label{fig:rct_25}
    \end{subfigure}
    \hfill
    \begin{subfigure}{0.3\linewidth}
        \centering
        \includegraphics[width=0.95\linewidth]{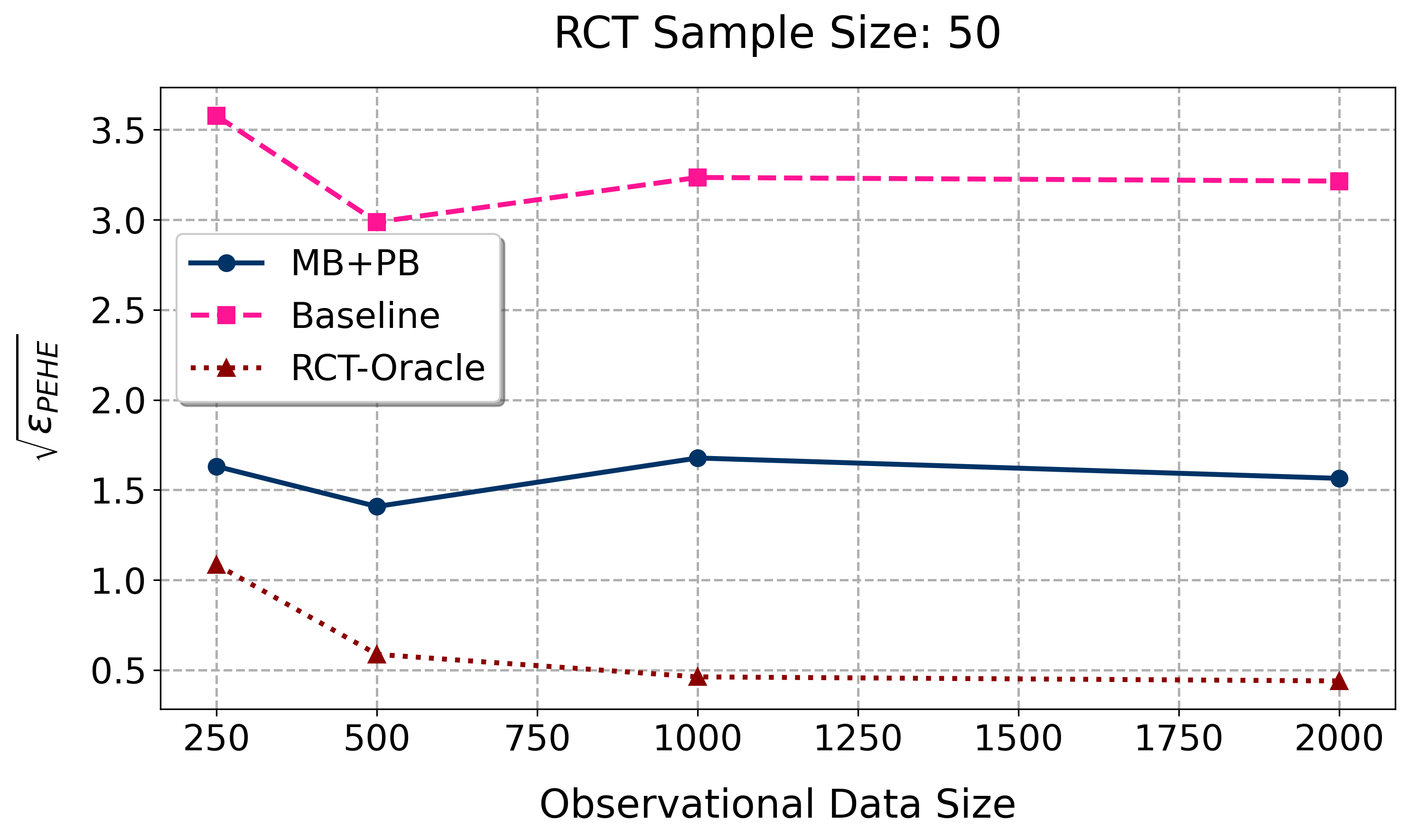}
        \caption{}
        \label{fig:rct_50}
    \end{subfigure}
    \hfill
    \begin{subfigure}{0.3\linewidth}
        \centering
        \includegraphics[width=0.95\linewidth]{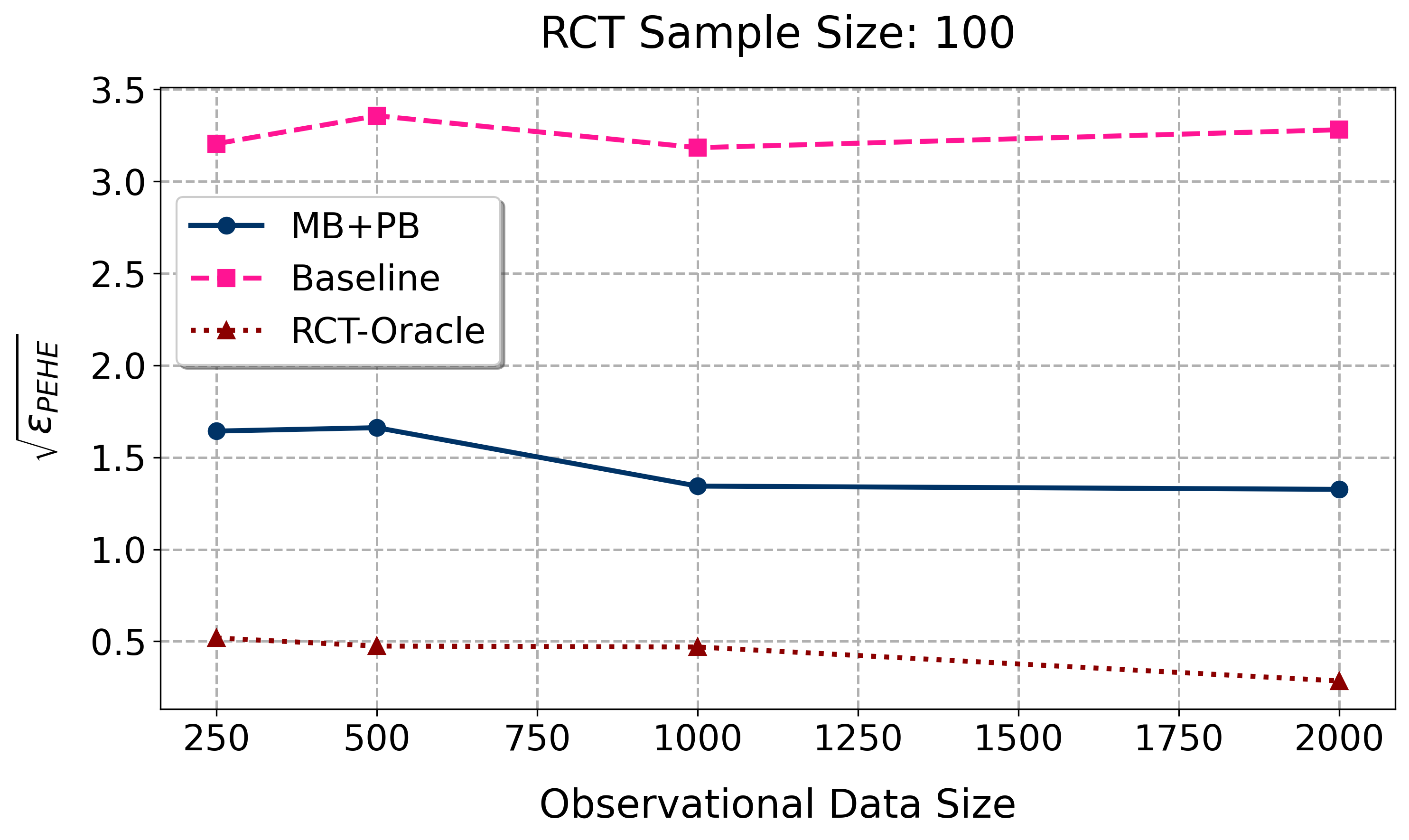}
        \caption{}
        \label{fig:rct_100}
    \end{subfigure}
    \caption{Comparison of $\sqrt{\varepsilon_{\text{PEHE}}}$ across different RCT and observational data sample sizes. Baseline: Factual Learner, MB+PB: Combined Marginals and Projections Balancing, and RCT-Oracle. The size of the baseline and RCT-Oracle is equal to the sum of the RCT samples and the observational data size.}
    \label{fig:size}
\end{figure*}
\label{sec:empirical_results}
\subsection{Synthetic Experiments}
\label{subsec:synthetic_experiments}

Following~\cite{kallus2019interval}, we begin our empirical evaluation with a synthetic example, which allows us to control the confounding degree based on a parameter $\Gamma$ of MSM (defined in Section~\ref{def:msm}) and \emph{explore the effect of varying levels of hidden confounding} on the estimation of CATE. 

\paragraph{Data Generating Process.} We consider an one-dimensional example to illustrate the influence of unobserved confounding on estimating CATE. In this example, we generate an unobserved binary confounder \( U \sim \text{Bern}(1/2) \), which is independent of other variables, and a covariate \( X \sim \text{Unif}[-2, 2] \). The nominal propensity score is defined as \( e(x) = \sigma(0.75x + 0.5) \), where \( \sigma(\cdot) \) is the logistic sigmoid function. To investigate the impact of confounding, we consider a sensitivity parameter \( \Gamma \) and define the complete propensity score as:
\begin{equation}
e(x, u) = u \cdot \alpha_t(x; \Gamma) + (1 - u) \cdot \beta_t(x; \Gamma),
\end{equation}
with 
$
\alpha_t(x; \Gamma) = \left( \frac{1}{\Gamma \cdot e(x)} \right) + 1 - \frac{1}{\Gamma},
$
and,
$
\beta_t(x; \Gamma) = \left( \frac{\Gamma}{e(x)} \right) + 1 - \Gamma.
$

Moreover, the treatment assignment \( T \) is sampled as \( T \sim \text{Bern}(e(X, U)) \). This structure ensures that the complete propensity scores attain the extremal marginal sensitivity model (MSM) bounds corresponding to \( \Gamma \) (see~\citep{kallus2019interval} for more details). 
The outcome model is chosen to exhibit a nonlinear CATE, incorporating both linear confounding terms and a noise component \( \varepsilon \sim \mathcal{N}(0, 1) \). Specifically, the potential outcome \( Y_t \) is  defined as:
$$
\begin{aligned}
Y_t & = (2t - 1)X + 2 (2t - 1) - 2 \sin(2(2t - 1)X) \\
&\quad  - 2(2U - 1)(1 + 0.5X) + \varepsilon.
\end{aligned}
$$
\begin{figure}
\centering
\includegraphics[width=0.4\textwidth]{./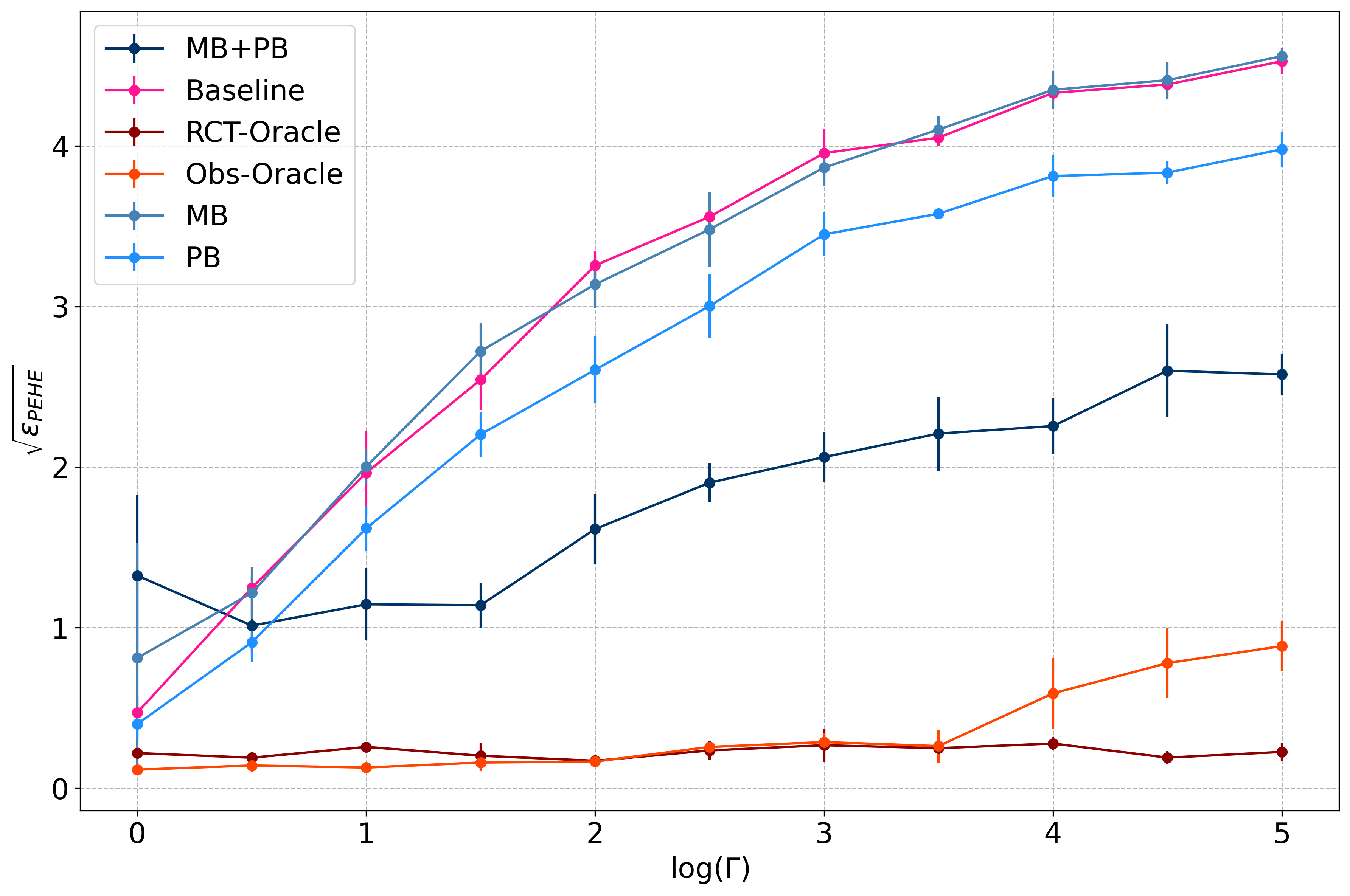}
\caption{$\sqrt{\varepsilon_{\text{PEHE}}}$ for different confounding degrees. Baseline: Factual Learner, MB: Marginals Balancing, PB: Projections Balancing, MB+PB: Combined Marginals and Projections Balancing, RCT-Oracle: Using a large RCT dataset with covariates, and Obs-Oracle: Using the observational dataset without hidden confounders.}
\label{Fig:conf_degree}
\end{figure}
\paragraph{Results.} The results are illustrated in Figure~\ref{Fig:conf_degree}. In particular, with increasing confounding level measured by $\log(\Gamma)$, methods such as MB, PB, and the baseline show a marked increase in estimation error. However, MB+PB demonstrates strong robustness and maintains lower errors even at high confounding levels. This suggests that our approach is better equipped to handle the adverse effects of hidden confounders, which is crucial when the confounding degree is unknown. Notably, domain knowledge can only provide very coarse estimations of the confounding degree. 

\paragraph{Influence of RCT Data Size}: In Figure \ref{fig:size}, we observe that after using only $50$ RCT data points in addition to more than $1000$ observational data points, the performance of MB+PB stabilizes. This shows that our model requires only a small number of RCT points to achieve enhanced performance, without requiring  the covariates information of RCT data. 
Even with as few as $25$ data points (the sum of both control and treatment units), we can see improved performance over the biased factual learner. It is important to note that this improvement is not observed when RCT points are simply added to the observational data, even when their features are included in training.

\subsection{Real Data Application}
Following the setting of~\cite{hatt2022combining},
we apply MB+PB to three real-world datasets. We briefly describe them below, with more details deferred to Appendix~\ref{sec:original_data}.

\textbf{STAR}: A randomized study from $1985$ investigating the effect of class size (treatment) on students' standardized test scores (outcome). Following \citep{kallus2018removing}, we obtain a dataset with 8 covariates for $4,139$ students: $1,774$ in small classes and $2,365$ in regular classes.

\textbf{ACTG}: A clinical trial on the effects of different treatments for HIV-1 patients with CD4 counts of $200$-$500$ cells/mm³. The outcome is the change in CD4 counts after $20 \pm 5$ weeks. 

\textbf{NSW}: An RCT studying the effect of job training on income (\citep{lalonde1986evaluating}. Following~\cite{smith2005does}, we combine 465 randomized subjects (297 treated, 425 control) with 2,490 observational controls, including 8 covariates.

Following the setting in~\cite{hatt2022combining}, the original dataset is used to estimate pseudo-true potential outcomes, which we treat as the ground truth. Confounding bias is introduced by dropping instances based on outcome thresholds. Further details are in Appendix~\ref{sec:new_data}. The RCT data points are sampled from a distributionally different population from the observational population, increasing selection bias. Despite this, our method remains robust.

\begin{table}[htbp]
\centering
\caption{Comparison of $\sqrt{\epsilon_\PEHE}$ across three real-world datasets. Results are
presented for $10$ runs.}
\label{tab:real_world_results}

\begin{tabular}{|l|c|c|c|}
\hline
 & \multicolumn{3}{c|}{$\sqrt{\epsilon_\PEHE}$ (Mean $\pm$ Std)} \\
\hline
 \textbf{Estimator} & \textbf{STAR} & \textbf{ACTG} & \textbf{NSW} \\
\hline
2-step ridge & 3.01 $\pm$ 0.01 & 1.51 $\pm$ 0.01 & 2.82 $\pm$ 0.02 \\
2-step RF & 3.14 $\pm$ 0.03 & 1.58 $\pm$ 0.07 & 3.10 $\pm$ 0.12 \\
2-step NN & 3.03 $\pm$ 0.02 & 1.60 $\pm$ 0.02 & 2.82 $\pm$ 0.02 \\
Baseline & 2.66 $\pm$ 0.01 & 1.08 $\pm$ 0.04 & 0.85 $\pm$ 0.04 \\
CorNet & 0.59 $\pm$ 0.01 & 0.42 $\pm$ 0.06 & 0.14 $\pm$ 0.07 \\
CorNet$^+$ & 0.38 $\pm$ 0.07 & \textbf{0.27} $\pm$ 0.03 & 0.21 $\pm$ 0.08 \\
MB+PB & \textbf{0.36} $\pm$ 0.04 & 0.52 $\pm$ 0.05 & \textbf{0.08} $\pm$ 0.02 \\
\hline
\end{tabular}
\end{table}

\textbf{Results.} To assess the effectiveness of our approach in utilizing RCT data, we compare it with the factual learner (\emph{Baseline}) which trains only on observational data, and with methods that use covariate information from RCT data, including \emph{2-step ridge}, \emph{2-step RF}, and \emph{2-step NN} from~\cite{kallus2018removing}, and CorNet models (\emph{CorNet} and \emph{CorNet+}), developed by \cite{hatt2022combining}. Table \ref{tab:real_world_results} shows that models such as \emph{2-step ridge}, \emph{2-step RF}, and \emph{2-step NN} underperform due to the high variance introduced by inverse propensity score re-weighting, as noted in~\cite{hatt2022combining}. The CorNet models perform significantly better and are comparable to our approach MB+PB. We emphasize that our MB+PB model relies solely on RCT data outcomes yet still achieves competitive results, outperforming CorNet in two of the three total tasks.


\section{Conclusion}
In this work, we introduced two approaches, Marginals Balancing (MB) and Projections Balancing (PB),  to address the challenge of CATE estimation under hidden confounders. By leveraging outcome-only RCT data, we demonstrated how these models mitigate bias from unobserved confounders, outperforming benchmark methods. The combination of MB and PB (MB+PB) leads to further enhanced performance across synthetic and real-world datasets. While our methods show promising empirical results, we aim to pursue a deeper theoretical understanding of the proposed methods in future works.

\acknowledgements{Ahmed Aloui, Juncheng Dong, and Vahid Tarokh were supported in part by the National Science Foundation (NSF) under the National AI Institute for Edge Computing Leveraging Next Generation Wireless Networks Grant \#  2112562. We also thank Prof. Galen Reeves for valuable discussions.}

\begin{figure}[ht]
    \centering
    \includegraphics[width=0.5\linewidth]{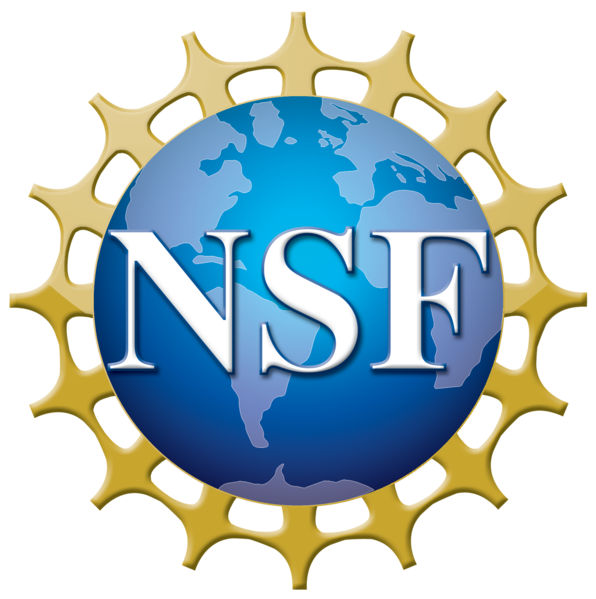}
\end{figure}



\bibliography{uai2025-template.bib}

\newpage

\onecolumn

\title{Conditional Average Treatment Effect Estimation Under Hidden Confounders\\(Supplementary Material)}
\maketitle

\appendix

\section{Proofs of Theoretical Results}
\label{proofs}
We begin by presenting an example demonstrating that the optimal solution for the Marginals Balancing objective is not necessarily the true conditional potential outcomes. We then proceed to provide propositions that support the use of the Projections Balancing method.
\paragraph{Example}
Consider the random variables \(T, X, Y_0, Y_1\), where \(T\) is a binary treatment indicator, \(X \in \mathcal{X}\), and \(Y_0, Y_1\) are the potential outcomes. We aim to minimize the following MB objective:

\[
\mathbb{E}\left[\left(1-T\right) \left(\mathbb{E}[\tilde{Y}_0 \mid X] - Y_0 \right)^2  + T \left( \mathbb{E}[\tilde{Y}_1 \mid X] - Y_1 \right)^2 \right],
\]
subject to the constraint that \( \tilde{Y}_0 \overset{d}{=} Y_0 \) and \( \tilde{Y}_1 \overset{d}{=} Y_1 \).

Suppose \(X \sim \text{Ber}(1/2)\) and \(T \sim \text{Ber}(1/2)\), with \(T\) and \(X\) being independent. Define the potential outcomes as:
\[
Y_0 = Y_1 = (1 - T)X + T(1 - X).
\]

Now, consider the random variables \(\tilde{Y}_0 = X\) and \(\tilde{Y}_1 = 1 - X\). We observe that both \(\tilde{Y}_0\) and \(\tilde{Y}_1\) satisfy the equality in distribution constraint: \(\tilde{Y}_0 \overset{d}{=} Y_0\) and  \(\tilde{Y}_1 \overset{d}{=} Y_1\). 

Furthermore, we have:
\[
\mathbb{E}[\tilde{Y}_0 \mid X] (1-T) = X (1-T) = Y_0 (1-T),
\]
and
\[
\mathbb{E}[\tilde{Y}_1 \mid X] T = (1 - X) T = Y_1 T .
\]

Therefore, the MB objective is minimized, and the objective value is zero. While we have that for the true conditional potential outcomes $\mathbb{E}\left[Y_1|X\right]$ and $\mathbb{E}\left[Y_0|X\right]$, we have that:
\[
\begin{aligned}
\mathbb{E}\left[Y_1|X\right] 
& = \mathbb{E}\left[(1 - T)X \mid  X \right] + \mathbb{E}\left[T(1 - X) \mid \right] \\
& = \mathbb{E}\left[1-T\right] \mathbb{E}\left[X \mid X\right] + \mathbb{E}\left[T\right] \mathbb{E}\left[\left(1-X\right) \mid X\right] \\
& = \frac{1}{2} X + \frac{1}{2} \left(1-X\right) \\
\end{aligned}
\]
Therefore, 
\[
\mathbb{E}\left[Y_1|X\right] = \frac{1}{2}, \quad \quad \mathbb{E}\left[Y_0|X\right] = \frac{1}{2}
\]
Which does not achieve a zero loss for the objective.

\begin{proposition1}[Ideal Potential outcomes learner 2]
Let $\left(\Omega, \mathcal{F}, \mathbb{P}\right)$ be a probability space. Consider the real random variables $\left(X, U, T, Y_0, Y_1\right)$, where $T$ is a binary random variable, and $Y_1,Y_0 \ind T |\left(X,U\right)$, $Y$ is defined as $Y = T Y_1 + (1 - T) Y_0$. We also assume that $X\ind U$. We aim to solve the following optimization problem:
$$
\min_{Z_1, Z_0 \; \sigma(X)\text{-measurable}} \mathbb{E}\left[\left(Z_T - Y\right)^2\right]
$$
subject to the constraint
$$
\forall g: \mathbb{R}\to [-1,1], \forall t \in \{0, 1\}, \quad \mathbb{E}\left[Z_t g(X)\right] = \mathbb{E}\left[Y_tg(X)\right].
$$
The unique solution for this problem is
$$
\forall t \in \{0, 1\}, \quad Z_t = \mathbb{E}\left[Y_t \mid X\right].
$$

\end{proposition1}

\begin{proof}[Proof of Proposition~\ref{thm:ideal_pb}]$\;$\\
We begin with the following identities for the observed and predicted outcomes:
\[
Y = T Y_1 + (1 - T) Y_0, \quad Z_T = T Z_1 + (1 - T) Z_0.
\]
Thus, the objective function can be expanded as:
\[
\begin{aligned}
\mathbb{E}\left[\left(Z_T - Y\right)^2\right] 
& = \mathbb{E}\left[\left(T(Z_1 - Y_1) + (1-T)(Z_0 - Y_0)\right)^2\right] \\
& = \mathbb{E}\left[T(Z_1 - Y_1)^2 + (1-T)(Z_0 - Y_0)^2\right] \\
& \quad + 2 \mathbb{E}\left[T (1-T) (Z_1 - Y_1)(Z_0 - Y_0)\right].
\end{aligned}
\]
Since \( T \in \{0,1\} \), we have \( T(1 - T) = 0 \), so the cross term vanishes:
\[
\mathbb{E}\left[T(1-T)(Z_1 - Y_1)(Z_0 - Y_0)\right] = 0.
\]
Thus, the objective simplifies to:
\[
\mathbb{E}\left[\left(Z_T - Y\right)^2\right] = \mathbb{E}\left[T(Z_1 - Y_1)^2\right] + \mathbb{E}\left[(1-T)(Z_0 - Y_0)^2\right].
\]
Next, we can analyze the optimization for \( Z_1 \) and \( Z_0 \) separately. Without loss of generality, we first focus on \( Z_1 \).

We expand the term for \( Z_1 \):
\begin{align*}
\mathbb{E}[T(Z_1 - Y_1)^2] 
& = \mathbb{E}[T(Z_1 - \mathbb{E}[Y_1 \mid X] + \mathbb{E}[Y_1 \mid X] - Y_1)^2]  \\
& = \underbrace{\mathbb{E}\left[T(Z_1 - \mathbb{E}[Y_1 \mid X])^2\right]}_{\text{Minimized at zero when } Z_1 = \mathbb{E}[Y_1 \mid X]} + \underbrace{\mathbb{E}\left[T(\mathbb{E}[Y_1 \mid X] - Y_1)^2\right]}_{\text{Independent of the optimization objective}}  \\
& \quad + \underbrace{2 \mathbb{E}\left[T (Z_1 - \mathbb{E}[Y_1 \mid X]) (\mathbb{E}[Y_1 \mid X] - Y_1)\right]}_{\text{We prove this term is zero below}} 
\end{align*}

Since \( Y_1 \ind T \mid (X, U) \), we have:
\[
\mathbb{E}\left[T (Z_1 - \mathbb{E}[Y_1 \mid X]) (\mathbb{E}[Y_1 \mid X] - Y_1)\right] = \mathbb{E}\left[(Z_1 - \mathbb{E}[Y_1 \mid X])\pi(X, U) \Psi(U) \right],
\]
where \( \pi(X, U) = \mathbb{E}[T \mid X, U] \in (0, 1) \) and \( \Psi(U) = -\mathbb{E}[Y_1 \mid U] \). Let \( A = \{ \omega \mid Z_1 - \mathbb{E}[Y_1 \mid X] > 0 \} \) and \( B = \{ \omega \mid \Psi(U) > 0 \} \).

We decompose the expectation as follows:
\[
\begin{aligned}
\mathbb{E}\left[\pi(X, U) \Psi(U) (Z_1 - \mathbb{E}[Y_1 \mid X])\right] 
& = \mathbb{E}\left[\pi(X, U) \Psi(U) \mathbbm{1}_{A \cap B} (Z_1 - \mathbb{E}[Y_1 \mid X])\right] \\
& \quad + \mathbb{E}\left[\pi(X, U) \Psi(U) \mathbbm{1}_{A^C \cap B} (Z_1 - \mathbb{E}[Y_1 \mid X])\right] \\
& \quad + \mathbb{E}\left[\pi(X, U) \Psi(U) \mathbbm{1}_{A \cap B^C} (Z_1 - \mathbb{E}[Y_1 \mid X])\right] \\
& \quad + \mathbb{E}\left[\pi(X, U) \Psi(U) \mathbbm{1}_{A^C \cap B^C} (Z_1 - \mathbb{E}[Y_1 \mid X])\right]
\end{aligned}
\]

We now handle each of these four terms separately:

Case 1 \(\left( A \cap B \right)\):

This term is positive, as both \( Z_1 - \mathbb{E}[Y_1 \mid X] > 0 \) and \( \Psi(U) > 0 \), and since $X\ind U$, we have that:
\[
\begin{aligned}
0 \leq \mathbb{E}\left[\pi(X, U) \Psi(U) \mathbbm{1}_{A \cap B} (Z_1 - \mathbb{E}[Y_1 \mid X])\right] 
& \leq \mathbb{E}\left[\Psi(U) \mathbbm{1}_{A \cap B} (Z_1 - \mathbb{E}[Y_1 \mid X])\right]. \\
& \leq \mathbb{E}\left[\Psi(U) \mathbbm{1}_B\right] \mathbb{E}\left[(Z_1 - \mathbb{E}[Y_1 \mid X])\mathbbm{1}_{A}\right] \\
& \leq \mathbb{E}\left[\Psi(U) \mathbbm{1}_B\right] (\mathbb{E}\left[Z_1 \mathbbm{1}_{A}\right] -\mathbb{E}\left[\mathbb{E}[Y_1 \mathbbm{1}_{A}\mid X]\right]) \\
& \leq \mathbb{E}\left[\Psi(U) \mathbbm{1}_B\right] \left(\mathbb{E}\left[Z_1\mathbbm{1}_{A}\right] -\mathbb{E}[Y_1 \mathbbm{1}_{A}]\right)
\end{aligned}
\]
However, since \( \mathbbm{1}_{A} \) is \( \sigma(X) \)-measurable, we can write it as a function of $X$, more precisely we can choose $g$ to be, $g_A(X) = \mathbbm{1}\left(X\in A\right)$, therefore,
\begin{align*}
0 & \leq \mathbb{E}\left[\pi(X, U) \Psi(U) \mathbbm{1}_{A \cap B} (Z_1 - \mathbb{E}[Y_1 \mid X])\right] \\
& \leq \mathbb{E}\left[\Psi(U) \mathbbm{1}_B\right] \left(\mathbb{E}\left[Z_1g_{A}(X)\right] -\mathbb{E}[Y_1 g_{A}(X)]\right) 
 = 0
\end{align*}


Case 2 \( \left(A^C \cap B \right)\):

In this case, \( Z_1 - \mathbb{E}[Y_1 \mid X] \leq 0 \) and \( \Psi(U) > 0 \), making this term non-positive:
\[
0 \geq \mathbb{E}\left[\pi(X, U) \Psi(U) \mathbbm{1}_{A^C \cap B} (Z_1 - \mathbb{E}[Y_1 \mid X])\right] \geq \mathbb{E}\left[\Psi(U) \mathbbm{1}_{A^C \cap B} (Z_1 - \mathbb{E}[Y_1 \mid X])\right].
\]
Again, by the same reasoning as in Case 1, we have:
\[
\mathbb{E}\left[\mathbb{E}\left[(Z_1 \mathbbm{1}_{A^C} - Y_1 \mathbbm{1}_{A^C}) \mid X \right]\right] = 0,
\]
so this term is also zero.

Case 3 \( \left(A \cap B^C \right)\):

Here, \( Z_1 - \mathbb{E}[Y_1 \mid X] > 0 \) but \( \Psi(U) \leq 0 \), so this term is non-positive:
\[
0 \geq \mathbb{E}\left[\pi(X, U) \Psi(U) \mathbbm{1}_{A \cap B^C} (Z_1 - \mathbb{E}[Y_1 \mid X])\right] \geq \mathbb{E}\left[\Psi(U) \mathbbm{1}_{A \cap B^C} (Z_1 - \mathbb{E}[Y_1 \mid X])\right].
\]
As in the previous cases, we factor out \( \mathbb{E}\left[Z_1 \mathbbm{1}_{A} - Y_1 \mathbbm{1}_{A} \mid X \right] = 0 \), so this term is zero.

Case 4 \( \left(A^C \cap B^C \right)\) :

Finally, in this case, both \( Z_1 - \mathbb{E}[Y_1 \mid X] \leq 0 \) and \( \Psi(U) \leq 0 \), so the term is positive:
\[
0 \leq \mathbb{E}\left[\pi(X, U) \Psi(U) \mathbbm{1}_{A^C \cap B^C} (Z_1 - \mathbb{E}[Y_1 \mid X])\right] \leq \mathbb{E}\left[\Psi(U) \mathbbm{1}_{A^C \cap B^C} (Z_1 - \mathbb{E}[Y_1 \mid X])\right].
\]
Once again, we apply the same reasoning, and the term equals zero:
\[
\mathbb{E}\left[\mathbb{E}\left[(Z_1 \mathbbm{1}_{A^C} - Y_1 \mathbbm{1}_{A^C}) \mid X \right]\right] = 0.
\]

Thus, each of the four terms is equal to zero. Therefore, the entire expression simplifies to zero:
\[
2 \mathbb{E}\left[T (Z_1 - \mathbb{E}[Y_1 \mid X]) (\mathbb{E}[Y_1 \mid X] - Y_1)\right] = 0.
\]

A symmetric argument holds for \( Z_0 \). By expanding \( \mathbb{E}\left[(1 - T)(Z_0 - Y_0)^2\right] \), we can use the same reasoning to show that \( Z_0 = \mathbb{E}[Y_0 \mid X] \) minimizes the objective function.

We now observe that \( \mathbb{E}\left[(1 - T)(Z_0 - Y_0)^2\right] \), and \( Z_0 = \mathbb{E}[Y_0 \mid X] \) verify the constraint as we have for every $g\in \mathcal{G}$:
\[
\begin{aligned}
\mathbb{E}\left[\mathbb{E}\left[Y_t \mid X\right] g(X) \right] 
& = \mathbb{E}\left[\mathbb{E}\left[Y_t g(X) \mid X\right]  \right] \\
& = \mathbb{E}\left[Y_t g(X)\right] \\
\end{aligned}
\]

Combining these results, we conclude the minimizer of the objective function must satisfy:
\[
Z_1 = \mathbb{E}[Y_1 \mid X] \quad \text{and} \quad Z_0 = \mathbb{E}[Y_0 \mid X].
\]

\end{proof}

\begin{proposition2}[Relaxed potential outcomes learner (PB)]
Let $\mathcal{G} = \{g: \mathbb{R}^d \rightarrow [-1,1]\}$ and let,
\[
L_p(Z_t) = \sup_{g\in \mathcal{G}} \left| \mathbb{E}\left[Z_t g(X)\right] - \mathbb{E}\left[Y'_t g(X)\right] \right|
\]
with \( Y'_t \overset{d}{=} Y_t \) and \( Y'_t \ind Y_t \). 
Then,
\[
\mathbb{E}\left[|Z_t - \mathbb{E}[Y_t \mid X]|\right] \leq L_p(Z_t) + \sqrt{Var(Y_t)}.
\]
\end{proposition2}

\begin{proof}[Proof of Proposition~\ref{thm:practical_pb}]$\;$\\
First define
\[
L_I(Z_t) = \sup_{g\in \mathcal{G}} \left| \mathbb{E}\left[Z_t g(X)\right] - \mathbb{E}\left[Y_t g(X)\right] \right|.
\]
We will first prove that
\[
\mathbb{E}\left[|Z_t - \mathbb{E}[Y_t \mid X]|\right] \leq L_I(Z_t).
\]
Since \( Z_t - \mathbb{E}[Y_t \mid X] \) is \(\sigma(X)\)-measurable, let \( A = \{\omega \in \Omega \mid Z_t - \mathbb{E}[Y_t \mid X] > 0\} \) and \( B = \{\omega \in \Omega \mid Z_t - \mathbb{E}[Y_t \mid X] \leq 0\} \). We can then define a function \( \Tilde{g} \in \mathcal{G} \) such that \( \Tilde{g} = \mathbbm{1}_A - \mathbbm{1}_B \). We have:
\[
\begin{aligned}
    \left| \mathbb{E}\left[Z_t \Tilde{g}(X)\right] - \mathbb{E}\left[Y_t \Tilde{g}(X)\right] \right| 
    &= \left| \mathbb{E}\left[(Z_t - Y_t) \Tilde{g}(X)\right] \right| \\
    &= \left| \mathbb{E}\left[\mathbb{E}\left[\left(Z_t - Y_t\right) \Tilde{g}(X) \mid X\right]\right] \right| \\
    &= \left| \mathbb{E}\left[\mathbb{E}\left[(Z_t - Y_t) \mid X\right] \Tilde{g}(X)\right] \right| \\
    &= \mathbb{E}\left[\left| \mathbb{E}\left[Z_t - Y_t \mid X\right] \mathbbm{1}_A \right|\right] + \mathbb{E}\left[\left| \mathbb{E}\left[Z_t - Y_t \mid X\right] \mathbbm{1}_B \right|\right]  \quad (A \cup B = \Omega) \\ 
    &= \mathbb{E}\left[\left| Z_t - \mathbb{E}[Y_t \mid X] \right|\right].
\end{aligned}
\]
Since we have
\[
\left| \mathbb{E}\left[Z_t \Tilde{g}(X)\right] - \mathbb{E}\left[Y_t \Tilde{g}(X)\right] \right| \leq \sup_{g\in \mathcal{G}} \left| \mathbb{E}\left[Z_t g(X)\right] - \mathbb{E}\left[Y_t g(X)\right] \right|,
\]
it follows that
\[
\mathbb{E}\left[\left| Z_t - \mathbb{E}[Y_t \mid X] \right|\right] \leq L_I(Z_t).
\]
Next, we observe:
\[
\begin{aligned}
    L_I(Z_t) &= \sup_{g\in \mathcal{G}} \left| \mathbb{E}\left[Z_t g(X)\right] - \mathbb{E}\left[Y'_t g(X)\right] + \mathbb{E}\left[Y'_t g(X)\right] - \mathbb{E}\left[Y_t g(X)\right] \right| \\
    &\leq \sup_{g\in \mathcal{G}} \left| \mathbb{E}\left[Z_t g(X)\right] - \mathbb{E}\left[Y'_t g(X)\right] \right| + \sup_{g\in \mathcal{G}} \left| \mathbb{E}\left[Y'_t g(X)\right] - \mathbb{E}\left[Y_t g(X)\right] \right| \\
    &\leq L_p(Z_t) + \sup_{g\in \mathcal{G}} \left| \mathbb{E}\left[Y'_t\right] \mathbb{E}\left[g(X)\right] - \mathbb{E}\left[Y_t g(X)\right] \right| \\
    &= L_p(Z_t) + \sup_{g\in \mathcal{G}} \left| \mathbb{E}[Y_t] \mathbb{E}[g(X)] - \mathbb{E}[Y_t g(X)] \right| \\
    &= L_p(Z_t) + \sup_{g\in \mathcal{G}} \left| \cov(Y_t, g(X)) \right| \\
    & \leq L_p(Z_t) + \sqrt{\mathrm{Var}(Y_t)} \sup_{g\in \mathcal{G}}\sqrt{\mathrm{Var}(g(X))} \quad \text{(Cauchy-Schwarz)} \\
    &\leq  L_p(Z_t) + \sqrt{\mathrm{Var}(Y_t)} \quad \text{(Popoviciu's inequality)}
\end{aligned}
\]
Thus, we conclude:
\[
\mathbb{E}\left[\left| Z_t - \mathbb{E}[Y_t \mid X] \right|\right] \leq L_p(Z_t) + \sqrt{Var(Y_t)}.
\]
\end{proof}

\section{Datasets Description}

\subsection{The Original Datasets}
\label{sec:original_data}
\paragraph{Tennessee Student/Teacher Achievement Ratio (STAR) Experiment}
This experiment, initiated in 1985, was designed as a randomized trial to investigate the impact of class size (i.e., the treatment) on students' standardized test performance (i.e., the outcome). At the beginning of the study, students and teachers were randomly allocated to different class sizes, with efforts to maintain these class sizes throughout the experiment. This dataset has been used previously by \citet{kallus2018removing} to address bias from unmeasured confounding in observational studies. 

In line with \citet{kallus2018removing}, we focus on two treatment conditions: small classes (\(13\)-\(17\) students) and regular-sized classes ($22$-$25$ students). The treatment variable is the class size to which students were assigned in the first grade, comprising a total of \(4,509\) students. The outcome variable \( Y \) is measured as the aggregate score from listening, reading, and mathematics standardized tests administered at the end of the first grade. In addition to class size and test scores, the dataset includes several covariates for each student: gender, race, birth month, birth date, birth year, eligibility for free lunch, rural/urban status, and teacher identification number. After excluding students with incomplete data, the resulting sample consists of \(4,139\) students, with \(1,774\) assigned to the treatment group (small classes, \( T = 1 \)) and \(2,365\) to the control group (regular classes, \( T = 0 \)). We sample 

\paragraph{AIDS Clinical Trial Group (ACTG) Study 175}
The AIDS Clinical Trial Group (ACTG) Study $175$ was a randomized clinical trial conducted to compare four treatment regimens on $2,139$ HIV-1-infected patients with CD4 counts between 200 and 500 cells/mm\(^3\) \citep{hammer1996}. The trial compared the effectiveness of zidovudine (ZDV) monotherapy, didanosine (ddI) monotherapy, ZDV combined with ddI, and ZDV combined with zalcitabine (ZAL). This dataset was also used in \citet{hatt2022generalizing} to study the problem of learning policies that generalize to target populations, making it a challenging candidate for evaluating our method due to underrepresentation of certain subgroups, such as HIV-positive females, in clinical trials \citep{gandhi2005eligibility, greenblatt2011priority}.

The outcome \( Y \) in this dataset is defined as the change in CD4 count from the start of the study to $20 \pm 5$ weeks later. The estimated average treatment effects for male and female subgroups are \(-8.97\) and \(-1.39\), respectively \citep{hatt2022generalizing}, indicating a notable difference in treatment response between genders. We focus on two treatment arms: the combined ZDV and ZAL treatment (\( T = 1 \)) and ZDV monotherapy (\( T = 0 \)). The dataset comprises \(1,056\) patients with \(12\) covariates, including five continuous variables: age (years), weight (kg, denoted as wtkg), baseline CD4 count (cells/mm\(^3\)), Karnofsky score (\(0-100\) scale, denoted as karnof), and baseline CD8 count (cells/mm\(^3\)). All continuous variables are centered and scaled prior to analysis. The dataset also includes seven binary covariates: gender (\(1 =\) male, \(0 =\) female), homosexual activity (homo, \(1 =\) yes, \(0 =\) no), race (\(1 = \) nonwhite, \(0 =\) white), intravenous drug use history (drug, \(1 =\) yes, \(0 =\) no), symptomatic status (symptom, \(1 =\) symptomatic, \(0 = \) asymptomatic), antiretroviral experience (str2, \(1 =\) experienced, \(0 =\) naive), and hemophilia (hemo, \(1 =\) yes, \(0 =\) no).

\paragraph{National Supported Work (NSW) Demonstration}
The National Supported Work (NSW) Demonstration was a subsidized work program that ran for four years across \(15\) locations in the United States, providing participants with transitional work experience and assistance in securing regular employment. From April \(1975\) to August \(1977\), the NSW program operated as a randomized experiment in \(10\) locations, with some applicants randomly assigned to a control group that did not participate in the program. Data for \(6,616\) treatment and control observations were collected through retrospective baseline interviews and four follow-up interviews, covering a two-year period before randomization and up to \(36\) months afterward.

For our analysis, we use a randomized dataset from \citet{lalonde1986evaluating}, following the setup of \citet{smith2005does}. We combine randomized samples from 465 subjects (297 treated and 425 controls) with 2,490 control samples from the Panel Study of Income Dynamics (PSID) to create an observational dataset. The resulting dataset consists of 297 treated observations (\( T = 1 \)) and 2,915 control observations (\( T = 0 \)). This study includes 8 covariates: age, education level, ethnicity (represented as two variables), marital status, and educational attainment.

\subsection{Generating Small Randomized Outcomes and Large Observational Datasets}
\label{sec:new_data}
In line with the method used by \citet{kallus2018removing,hatt2022combining} we generate a large observational dataset with confounding and a smaller unconfounded randomized dataset consisting solely of the outcomes, both derived from the real-world data described in Section~\ref{sec:original_data}. Importantly, the randomized dataset is drawn from a different population than the observational one, reflecting the limitations of randomized controlled trials (RCTs) in generalizing to the broader population of interest.

To do this, we follow the same procedure for the STAR, ACTG, and NSW datasets. First, we generate a small, unconfounded randomized dataset by sampling a small fraction of the RCT data points $128,50,50$. instances from the original dataset. We introduce a distributional discrepancy between the randomized and observational datasets by selecting individuals for the randomized dataset based on a covariate (``birthday'' for STAR, ``gender'' for ACTG, and ``age'' for NSW), see~\citep{hatt2022combining} for further details. Second, we create the observational dataset by introducing unobserved confounding, ensuring that the treatment and control groups differ systematically in their potential outcomes. Following \citet{kallus2018removing}, we select subjects from those who were not included in the randomized dataset: controls (\(T = 0\)) with especially low outcomes (i.e., \( y_i < \mathbb{E}[Y \mid T = 0] - c \cdot \sigma_{Y \mid T=0} \), where \( \sigma_{Y \mid T=0} \) is the standard deviation of the outcomes in the control group) and treated subjects (\(T = 1\)) with notably high outcomes (i.e., \( y_i > \mathbb{E}[Y \mid T = 1] + c \cdot \sigma_{Y \mid T=1} \), where \( \sigma_{Y \mid T=1} \) is the standard deviation of the outcomes in the treatment group).

The constant \(c\) is adjusted according to the size of the original dataset (with \(c = 1\) for STAR, \(c = 0\) for ACTG, and \(c = 0.25\) for NSW) to control the number of subjects in the observational dataset, ensuring that it remains large. This process introduces confounding by selectively including control subjects with lower outcomes and treated subjects with higher outcomes into the observational treatment and control groups. As a result, a naïve estimator relying solely on the observational data will be biased. Moreover, because this selection is based on the outcome variable, it becomes impossible to control for this confounding.

\section{Implementation Details}

\begin{algorithm}[t]
   \caption{Training Algorithm for Marginals and Projections Balancing (MB+PB)}
   \label{alg:mb_pb}
\begin{algorithmic}[1]
   \STATE {\bfseries Input:} 
   $D_{o} = \{(x_i, t_i, y_i)\}_{i=1}^{n_o}$, $D_r= \{D^0_r,D_r^1\}$ where $D^t_r=\{y^t_j\}^{n_r^{t}}_{j=1}$ for $t \in \{0,1\}$, initial and final weights $(\alpha_{s}, \alpha_{e})$, number of epochs $N_2$, balancing iterations $N_{b}$, neural networks for: potential outcomes ($\mu$), marginals balancing ($\tilde{g}$), and projections balancing ($g$).
   \STATE {\bfseries Output:} Trained models $\mu$ and $\psi$.
   
   \STATE Initialize noise $\eta \sim \mathcal{N}\left(\mathbf{0}_l, \mathbf{I}_l\right)$ and generate $n_o$ samples $\{\eta_i\}_{i=1}^{n_o}$.
   
   \FOR{$\text{epoch} = 1$ to $N_{1}$}
       \STATE Increase $\alpha$ from $\alpha_{s}$ to $\alpha_{e}$.
       \STATE Generate noise $\tilde{u}_i = \psi(\eta_i)$ and estimate outcomes $\hat{y}_i = \mu_{t_i}(x_i,\tilde{u}_i)$ for all $1\leq i \leq n_o$.
       \STATE Compute factual loss:
       \[
       \mathcal{L}_f = \frac{1}{n_o} \sum_{i=1}^{n_o} \left(t_i \left(y_i - \hat{y}_i\right)^2 + (1-t_i)\left(y_i - \hat{y}_i\right)^2\right)
       \]
       \STATE Generate potential outcomes $\hat{y}^1_i = \mu_1(x_i, \tilde{u}_i)$ and $\hat{y}^0_i = \mu_0(x_i, \tilde{u}_i)$.
       \STATE Compute marginals balancing loss:
       \[
       \mathcal{L}_m = \left(\frac{1}{n^1_r} \sum_{i=1}^{n^1_r} \tilde{g}(y^1_i) - \frac{1}{n_o} \sum_{i=1}^{n_o} \tilde{g}(\hat{y}^1_i)\right)^2 
       + \left(\frac{1}{n^0_r} \sum_{i=1}^{n^0_r} \tilde{g}(y^0_i) - \frac{1}{n_o} \sum_{i=1}^{n_o} \tilde{g}(\hat{y}^0_i)\right)^2
       \]
       \STATE{Compute projections balancing loss:
       \[
       \mathcal{L}_p = \left(\frac{1}{n^1_r} \sum_{i=1}^{n^1_r} g(x_{\lambda(i)})y^1_i - \frac{1}{n_o} \sum_{i=1}^{n_o} g(x_i)\hat{y}^1_i\right)^2 
       + \left(\frac{1}{n^0_r} \sum_{i=1}^{n^0_r} g(x_{\lambda(i)})y^0_i - \frac{1}{n_o} \sum_{i=1}^{n_o} g(x_i)\hat{y}^0_i\right)^2
       \]
       where $\lambda(i)$ selects a random number between $1$ and $n_o$.}
       \STATE Compute total loss $\mathcal{L} = \mathcal{L}_f + \alpha (\mathcal{L}_m + \mathcal{L}_p)$
       \STATE Backpropagate to update $\mu$ and $\psi$ using Adam.
       
       \FOR{each balancing iteration $n = 1$ to $N_{\text{balancing}}$}
          \STATE Calculate the negative regularization loss:
          $
          \mathcal{L}_r = - (\mathcal{L}_m + \mathcal{L}_p)
          $
          \STATE Backpropagate to update $\tilde{g}$ and $g$ using Adam. 
       \ENDFOR
   \ENDFOR
   \STATE Return trained models $\{\mu_t\}_{t=0}^1$, and $\psi$.
\end{algorithmic}
\end{algorithm}
In this section, we provide the implementation details of our proposed algorithm MB+PB. Specifically, we describe the neural network architectures used for the different modules in our algorithm. Additionally, we present a detailed pseudo-code for the training procedure.

\paragraph{The Neural Networks Architectures.}  As detailed in Section~\ref{sec:algo}, MB+PB consists of three components: a generator $\psi(\eta)$, a CATE learner $\mu_t(X,\Tilde{U})$, a marginals balancing module $\tilde{g}$, and a projections balancing module $g$.

\begin{itemize}
    \item \textbf{Pseudo-Confounder Generator:} The generator $\psi(\eta)$ is a neural network designed to generate pseudo-confounders from the input variables, which consist of standard Gaussian noise. The network architecture consists of two fully connected layers with 16 hidden units and ELU activation functions.

    \item \textbf{CATE Learner:} The CATE learner is modeled as an S-Learner $\mu_t(X, \Tilde{U})$ and is implemented using a neural network with three fully connected layers. The first two layers have $32$ hidden units, each followed by an ELU activation function. The final layer outputs a scalar, representing the estimated potential outcome.

    \item \textbf{MB Module:} The marginals balancing module $\tilde{g}$ is modeled as a neural network with two hidden layers, each containing 8 hidden units. ReLU activation functions are applied to the hidden layers, and the output is constrained between $-1$ and $1$ or $0$ and $1$, using either a tanh or a sigmoid activation function, respectively.

    \item \textbf{PB Module:} The projections balancing module $g$ is also modeled as a neural network with two hidden layers, each containing 8 hidden units. ReLU activation functions are applied to the hidden layers, and the output is constrained between $-1$ and $1$ or $0$ and $1$, using either a tanh or a sigmoid activation function, respectively.  
\end{itemize}
We use the same neural network architectures for all of our results presented in the Experiments Section~\ref{sec:empirical_results}.

\paragraph{The Algorithm.} We present the full pseudo-code for MB+PB in Algorithm~\ref{alg:mb_pb}. The code consists of the training loop of the proposed model and the loss functions computation.

\paragraph{Hyperparameters.} 
For the regularization parameter $\alpha$ is set dynamically, following the heuristic described below. We initially start with a small value for $\alpha$, and as the observed factual loss optimization stabilizes, we gradually increase the importance of the regularization term. In all of our experiments, we train for $2000$ epochs. Specifically, we set $\alpha = 0.01$ for the first $1230$ epochs, then linearly increase $\alpha$ from $0.01$ to $100$ between epochs $1230$ and $1430$. From epoch $1430$ to $2000$, we train the model with the high regularization term $\alpha = 100$. Additionally, as described in Algorithm~\ref{alg:mb_pb}, there are multiple balancing steps involved in training the MB+PB constraint. To increase the efficiency of our training process, we begin with a small number of balancing iterations (5) when $\alpha$ is small, and increase this number to 50 as $\alpha$ becomes large. Note that we use the same training strategy across all the datasets to avoid fine-tuning the hyperparameter and to have a better assessment of the presented algorithm. For the learning rates of the different neural networks they are all set at $0.001$ and we use Adam as an optimizer. Finally, for the batch sizes, we use a batch size of $256$, $200$, and $200$ for STAR, ACTG, and NSW respectively.

\paragraph{Computational Resources}
The experiments in this paper are not computationally expensive to conduct and were performed on the following GPU: NVIDIA GeForce RTX 3090.

\section{Additional Results}
Here we include additional empirical results.

\subsection{Synthetic Example}
We begin by presenting additional results for the synthetic experiment discussed in the main text, following the approach of \citet{kallus2019interval}. In Figure~\ref{fig:synthetic-1d-epochs}, we report the $\sqrt{\varepsilon_{\PEHE}}$ as a function of training epochs. Additionally, the results for the factual loss across varying degrees of confounding are provided in Figure~\ref{fig:conf_factual}.

\begin{figure}[ht]
    \centering
    \begin{subfigure}{0.32\linewidth}
        \centering
        \includegraphics[width=\linewidth]{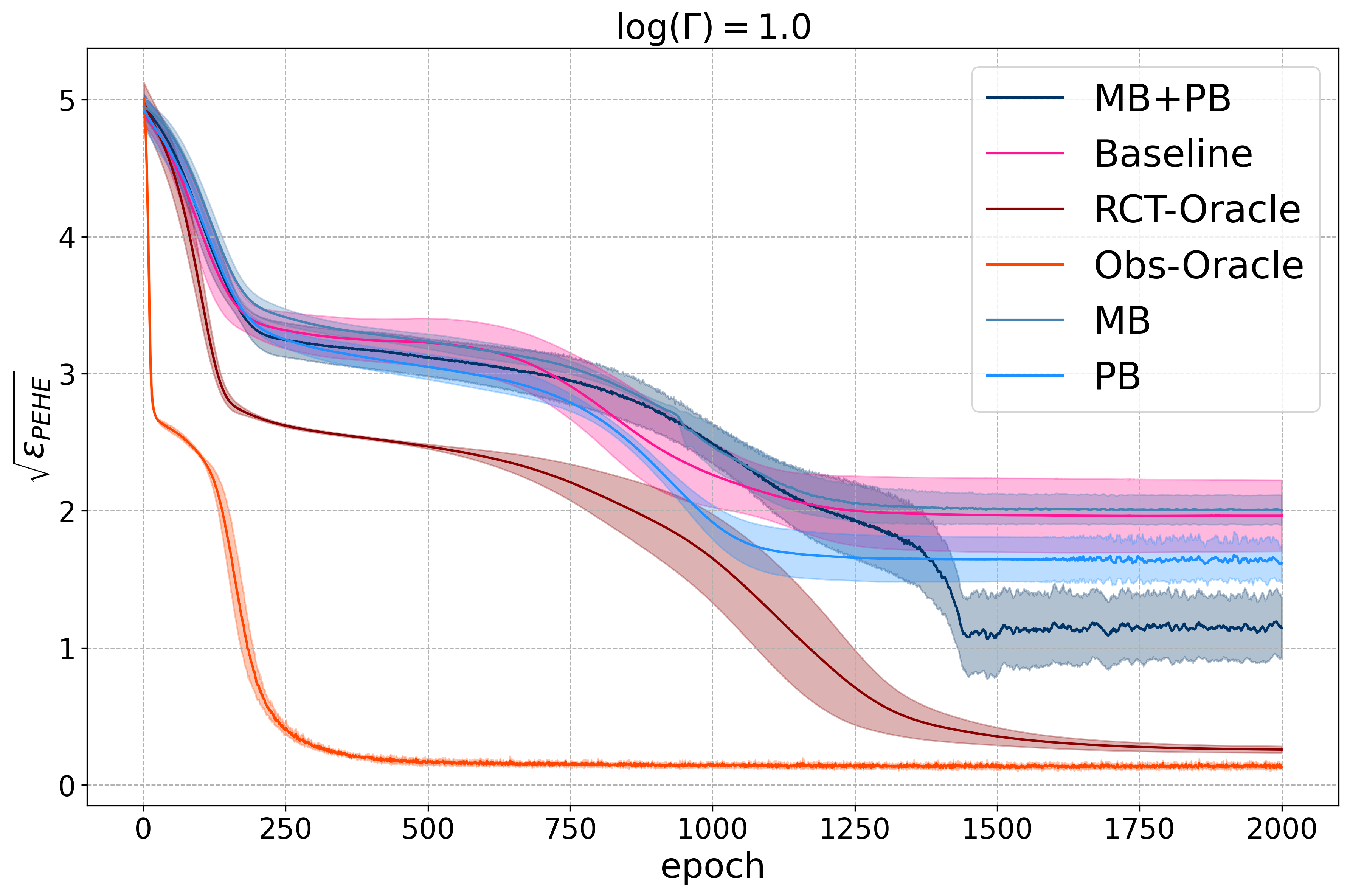}
        \caption{Training $\sqrt{\varepsilon_{\PEHE}}$ for $log(\Gamma) = 1.0$}
        \label{fig:pehe-loss_1}
    \end{subfigure}
    \hfill
    \begin{subfigure}{0.32\linewidth}
        \centering
        \includegraphics[width=\linewidth]{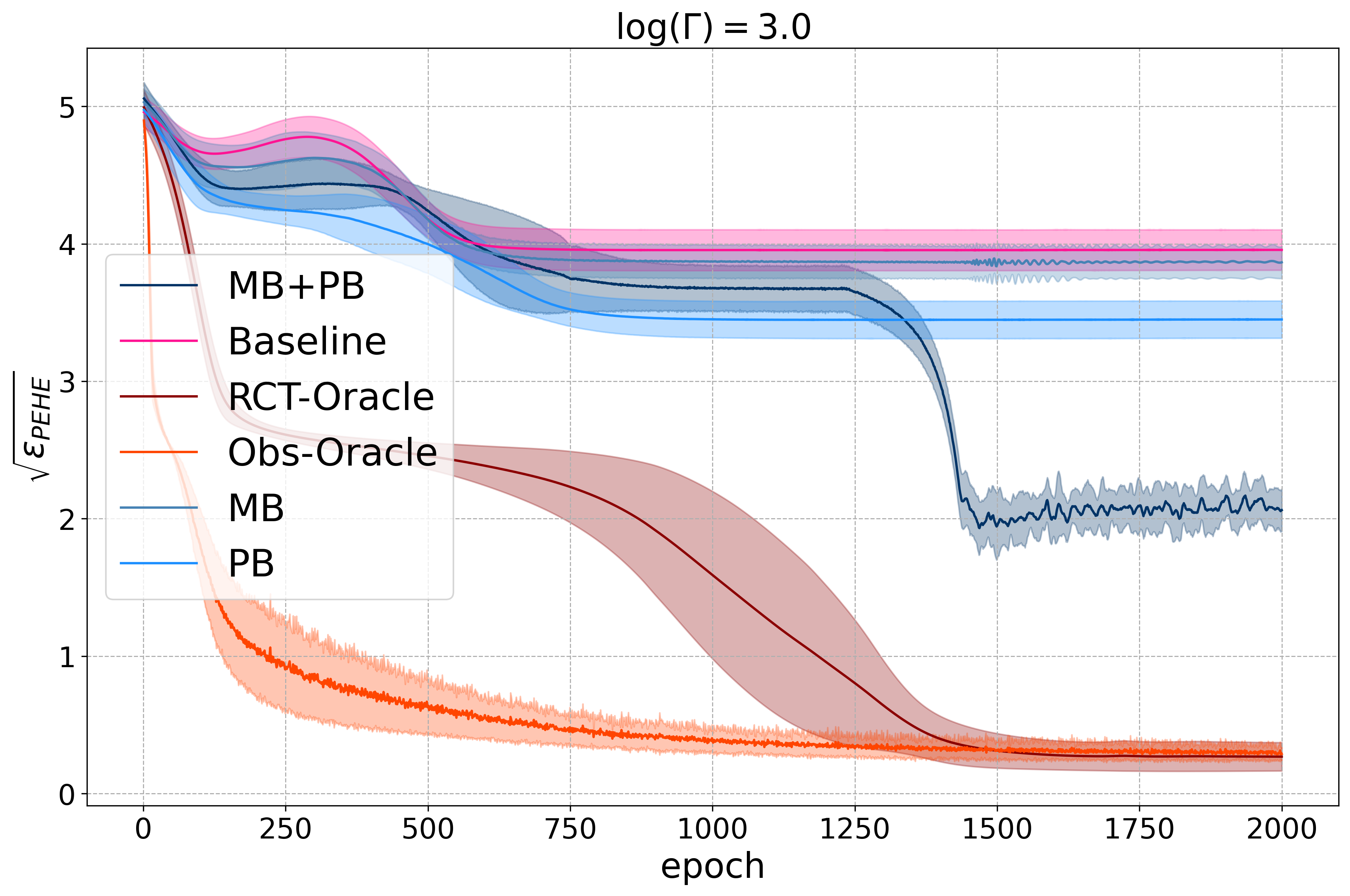}
        \caption{Training $\sqrt{\varepsilon_{\PEHE}}$ for $log(\Gamma) = 3.0$}
        \label{fig:pehe-loss_3}
    \end{subfigure}
    \hfill
    \begin{subfigure}{0.32\linewidth}
        \centering
        \includegraphics[width=\linewidth]{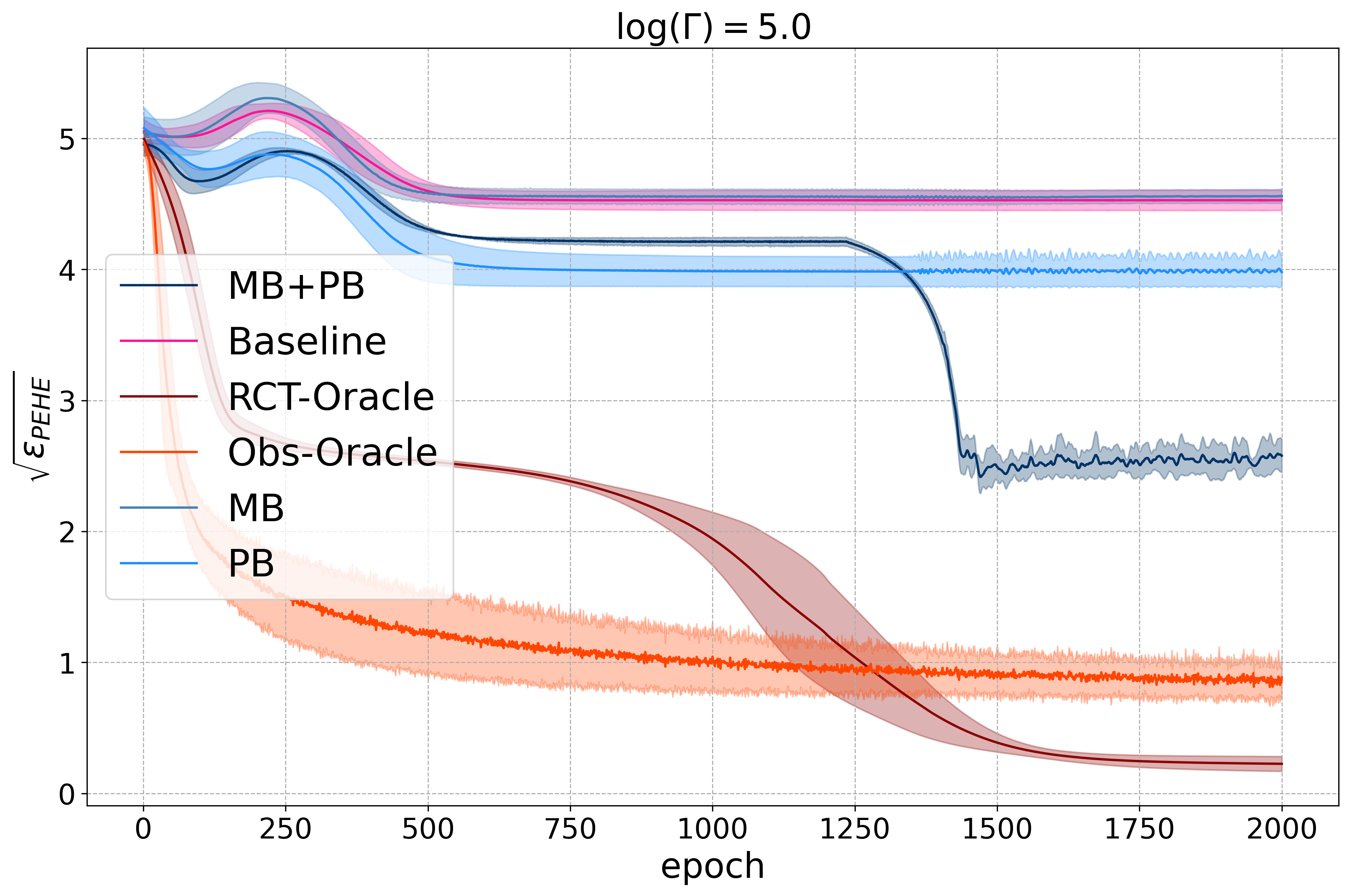}
        \caption{Training $\sqrt{\varepsilon_{\PEHE}}$ for $log(\Gamma) = 5.0$}
        \label{fig:pehe-loss5}
    \end{subfigure}
    \caption{Comparison of $\sqrt{\varepsilon_{\text{PEHE}}}$ across training epochs for different levels of confounding ($\log(\Gamma)$).}
    \label{fig:synthetic-1d-epochs}
\end{figure}

\begin{figure}[ht]
    \centering
    \includegraphics[width=0.5\linewidth]{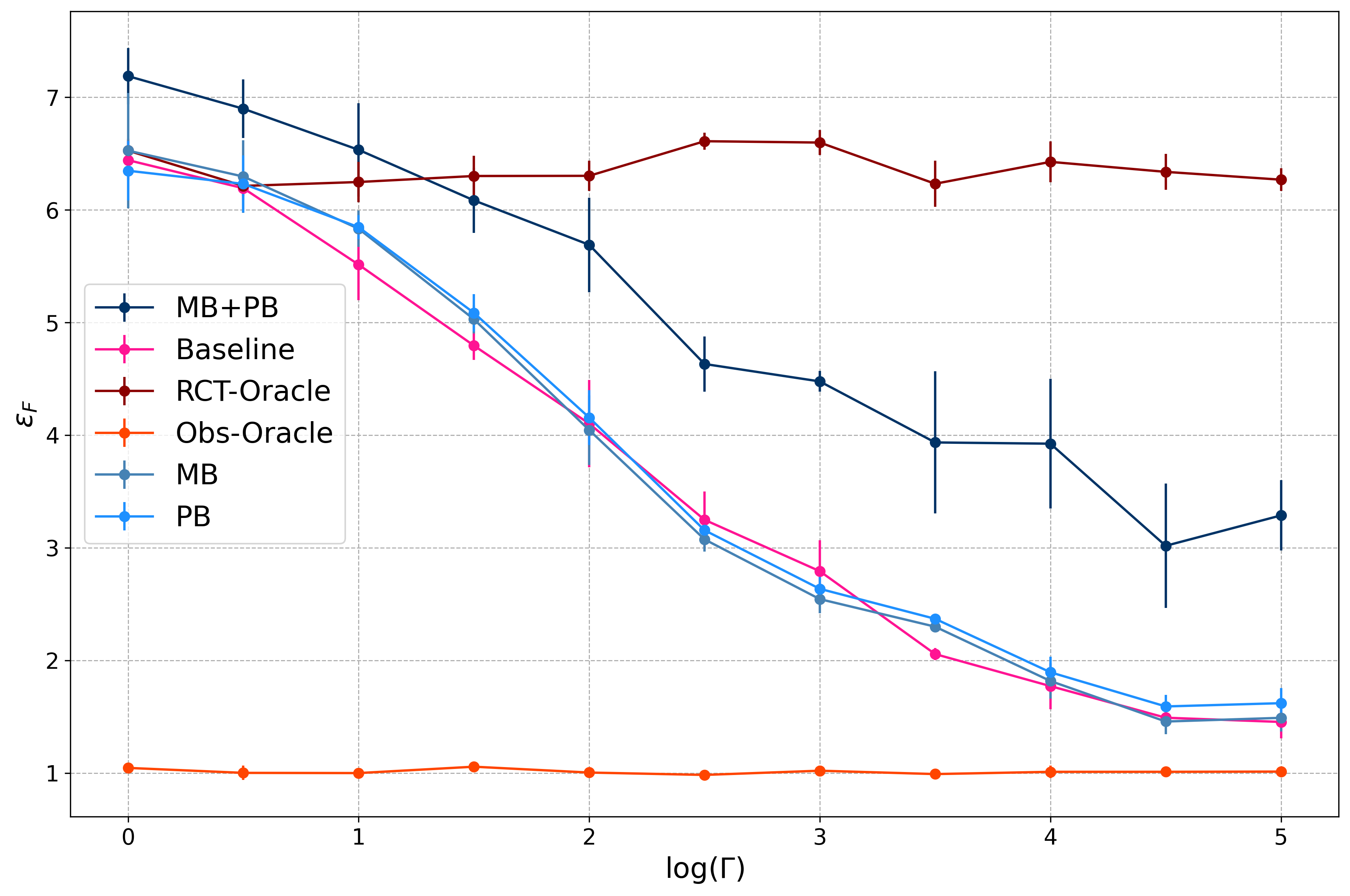}
    \caption{Factual loss comparison across different degrees of confounding.}
    \label{fig:conf_factual}
\end{figure}

\subsection{Factual Loss Comparison Across Real-World Datasets}

\begin{table}[ht]
\centering
\caption{Comparison of the factual loss $\epsilon_{\F}$ (Mean $\pm$ Std) across three real-world datasets. Results are
presented for $10$ runs.}
\label{tab:real_factual}

\begin{tabular}{|l|c|c|c|}
\hline
 & \multicolumn{3}{c|}{$\epsilon_\F$ (Mean $\pm$ Std)} \\
\hline
\textbf{Estimator} & \textbf{STAR} & \textbf{ACTG} & \textbf{NSW} \\
\hline
Baseline & 1.3$\pm$ 0.02 & 1.26 $\pm$ 0.05 & 0.38$\pm$ 0.02 \\
MB+PB (Ours) & \textbf{1.08} $\pm$ 0.13 & \textbf{0.72} $\pm$ 0.03 & \textbf{0.17} $\pm$ 0.01 \\
\hline
\end{tabular}
\end{table}

Table~\ref{tab:real_factual} presents a comparison of the factual loss, $\epsilon_{\F}$, measured as the mean and standard deviation over 10 runs for three real-world datasets: STAR, ACTG, and NSW. We note that while the baseline model is designed to estimate the factual outcome, it may suffer from distributional shift as the domain of the features of the test data is different from that of the train data. Hence, learning a better causal model in that case yields better factual estimates. We conjecture that this enhanced performance is explained by the fact that our model learns a better model which makes it more robust to distributional shifts, as was formalized by ~\citep{richens2024robust}.

The baseline estimator is compared against our method, MB+PB. The results demonstrate the superiority of MB+PB in terms of lower factual loss, particularly for the STAR and NSW datasets. This reduction in factual loss indicates that our method is more effective at aligning the model predictions with the observed outcomes, thereby mitigating the effects of confounding and improving the estimation of potential outcomes.

For the STAR dataset, our method achieves a mean factual loss of $1.08 \pm 0.13$, outperforming the baseline, which has a loss of $1.3 \pm 0.02$. Similarly, the NSW dataset shows a significant improvement with MB+PB, resulting in a mean loss of $0.17 \pm 0.01$ compared to the baseline loss of $0.38 \pm 0.02$. However, for the ACTG dataset, both methods exhibit relatively close performance, with MB+PB slightly outperforming the baseline by reducing the mean loss from $1.26 \pm 0.05$ to $0.72 \pm 0.03$.

These results confirm that the MB+PB method is more robust across different datasets compared to the naive factual learner, even in terms of factual loss when there is a distributional shift, which is prevalent in real-world scenarios.


\end{document}